\documentclass{article}

\usepackage[utf8]{inputenc}
\usepackage[colorlinks = true,
            linkcolor = blue,
            urlcolor  = blue,
            citecolor = blue,
            anchorcolor = blue]{hyperref}
\usepackage{amsmath}
\usepackage{amsthm}
\usepackage{xcolor}
\usepackage{algorithm}
\usepackage{algorithmic}
\usepackage[capitalise]{cleveref}
\usepackage{graphicx}
\usepackage{amssymb}
\usepackage{verbatim}
\usepackage{nicefrac}
\usepackage{cite}

\usepackage{geometry}
 \geometry{
 a4paper,
 total={170mm,257mm},
 left=20mm,
 top=20mm,
 }
\sloppy

\makeatletter
\theoremstyle{plain}
\newtheorem{thm}{\protect\theoremname}
\theoremstyle{definition}
\newtheorem{defn}[thm]{\protect\definitionname}

\newtheorem{example}[thm]{\protect\examplename}
\theoremstyle{plain}
\newtheorem{lem}[thm]{\protect\lemmaname}

\newtheorem{prop}[thm]{\protect\propname}

\newtheorem{conj}[thm]{\protect\conjecturename}

\makeatother

\providecommand{\corollaryname}{Corollary}
\providecommand{\definitionname}{Definition}
\providecommand{\lemmaname}{Lemma}
\providecommand{\theoremname}{Theorem}
\providecommand{\exercisename}{Exercise}
\providecommand{\examplename}{Example}
\providecommand{\remarkname}{Remark}
\providecommand{\propname}{Proposition}
\providecommand{\conjecturename}{Conjecture}
\usepackage{soul}

\author{Alon Gonen\footnote{University of California San Diego} \and Shachar Lovett\footnote{University of California San Diego} \and Michal Moshkovitz\footnote{University of California San Diego}}
\title{Towards a combinatorial characterization of bounded memory learning}

\begin{document}
\maketitle 

\begin{abstract}
Combinatorial dimensions play an important role in the theory of machine learning. For example, VC dimension characterizes PAC learning, SQ dimension characterizes weak learning with statistical queries, and Littlestone dimension characterizes online learning.

In this paper we aim to develop combinatorial dimensions that characterize bounded memory learning. We propose a candidate solution for the case of realizable strong learning under a known distribution, based on the SQ dimension of neighboring distributions. We prove both upper and lower bounds for our candidate solution, that match in some regime of parameters. In this parameter regime there is an equivalence between bounded memory and SQ learning.  
 We conjecture that our characterization holds in a much wider regime of parameters.
\end{abstract}



\section{Introduction}
Characterization of different learning tasks using a combinatorial condition has been investigated in depth in machine learning. Learning a class in an unconstrained fashion is characterized by a finite VC dimension \cite{vapnik15,blumer89}, 
and weakly learning in the statistical query (SQ) framework is characterized by a small SQ dimension \cite{blum94}.
Is there a simple combinatorial condition that characterizes learnability with bounded memory? In this paper we propose a candidate condition,  prove upper and lower bounds that match in some of the regime of parameters, and conjecture that they match in a much wider regime of parameters.

A learning algorithm that uses $b$ bits of memory, $m$ samples, and accuracy $1-\epsilon$ is defined as follows: the algorithm receives a series of $m$ labeled examples one by one, while only preserving an internal state in $\{0,1\}^b$ between examples. In this paper we focus our attention on the realizable setting: the labeled examples are pairs $(x_i, c(x_i))$, where $x_i \in \mathcal{X}$ and $c:\mathcal{X} \to \{-1,1\}$ is a concept in a concept class $\mathcal{C}$. The algorithm is supposed to return with constant probability a hypothesis $h$ which matches the unknown concept $c$ on a $1-\epsilon$ fraction of the underlying distribution. In this paper we further assume that the underlying distribution $P$ on $\mathcal{X}$ is known to the learner, similar to the setting in the SQ framework.

There are two ``trivial'' algorithms for the problem which we now present. For ease of presentation, we restrict our attention in the introduction to a small constant $\epsilon$, say $\epsilon=0.01$. Without making any additional assumptions, the following space complexity bounds are known when learning with accuracy $0.99$:
\begin{enumerate}
\item The ERM algorithm keeps in memory   $m=O\left(\log|\mathcal{C}|\right)$ samples, and outputs a hypothesis that is consistent with the entire sample. This requires $b=O\left(\log|\mathcal{C}|\log|\mathcal{X}|\right)$ bits.
\item
A learning algorithm that
enumerates all possible concepts in $\mathcal{C}$ and the consistency of each concept based on few random samples. This algorithms requires $m=O(|\mathcal{C}|\log|\mathcal{C}|)$ samples and $b=O(\log|\mathcal{C}|)$ bits. 
\end{enumerate}

We define a class $\mathcal{C}$ under a distribution $P$ to be \emph{learnable with bounded memory} if there is a ``non-trivial'' learning algorithm with respect to both sample complexity and space complexity. A bit more formally, if there is a learning algorithm that uses only $m=|\mathcal{C}|^{o(1)}$ samples and
 $b=o(\log|\mathcal{C}|\log|\mathcal{X}|)$ bits (see \cref{dfn:bounded_memory_learning}).
 
To introduce our main result we need two definitions.
The statistical query (SQ) dimension, $SQ_P(\mathcal{C})$, is a known complexity measure that characterizes weak learning in the SQ framework (see \cref{def:SQdim}). Given a distribution $P$, we say that a distribution $Q$ is $\mu$-close to it (where $\mu \ge 1$) if the ratio $P(x)/Q(x)$ is between $1/\mu$ and $\mu$ for all points $x$ in the domain. We denote by $\mathcal{P}_{\mu}(P)$ the set of all distributions which are $\mu$-close to $P$ (see \cref{def:muclose}).

Our main results are upper and lower bounds on bounded memory learning, in terms of the SQ dimension of distributions in the neighbourhood of the underlying distribution $P$:

\begin{enumerate}
    \item Suppose that there is a parameter $d \ge 1$ such that for any distribution $Q \in\mathcal{P}_{d}(P)$ it holds that $\mathrm{SQ}_{Q}(\mathcal{C})\le d$. Then there exists an algorithm that learns the class $\mathcal{C}$ with accuracy $0.99$ under the distribution $P$ using $b=O(\log (d) \cdot\log\vert\mathcal{C}\vert)$
bits and $m=\mathrm{poly}(d) \cdot \log(|\mathcal{C}|)\cdot\log\log(|\mathcal{C}|)$ samples.
\item If the class $\mathcal{C}$ is PAC-learnable under $P$ with accuracy $0.99$ using $b$ bits and $m$ samples, then for every distribution $Q \in\mathcal{P}_{\Theta(1)}(P)$
its SQ dimension is bounded by  $SQ_{Q}(\mathcal{C})\leq\max(\mathrm{poly}(m),2^{O(\sqrt{b})})$. 
\end{enumerate}
In \cref{sec:main_results} we give a more detailed account of the bounds for general $\epsilon$. We show that for small enough $\epsilon$, the two conditions coincide and we in fact get a characterization of bounded memory learning. We conjecture that the characterization holds for a larger range of parameters (see \cref{conj:main}).
We also prove similar conditions for SQ learning, thus implying equivalence between bounded memory learning and SQ learning for small enough $\epsilon$.

\subsection{Problem setting}
In this paper we consider two learning frameworks: a) The PAC model \cite{valiant84} and b) The Statistical Query framework \cite{kearns98}.
\paragraph{PAC model.}
In PAC learning \cite{valiant84} we consider the task of binary classification over an \textit{instance space} $\mathcal{X}$. 
Denote by $ \mathcal{C} \subseteq \{-1,1\}^ \mathcal{X} $ a concept class of functions mapping instances to binary labels, and let $c \in \mathcal{C}$ be the \textit{target} (a.k.a. true) concept. Also, let $P$ be the underlying probability distribution over $\mathcal{X}$. We assume that $P$ is known to the learner whereas the target concept $c$ is not known.

The input to the learning algorithm $\mathcal{A}$ consists of a labeled sample $S=((x_1,c(x_1)),\ldots,(x_m,c(x_m)))$ such that $S_X:=(x_1,\ldots,x_m) \sim P^{m}$. Its output has the form of a hypothesis $h\in\{-1,1\}^{\mathcal{X}}$. We measure the success of the algorithm according to its expected error
$L_{P,c}(h)=\Pr_{x\sim P}(h(x)\neq c(x))$. We say that $h$ is $\epsilon$-accurate if $L_{P,c}(h)\le \epsilon$. The \textit{sample complexity} of $\mathcal{A}$ under the distribution $P$, denoted $m(\epsilon):(0,1) \rightarrow\mathbb{N}$, is a function mapping a desired accuracy $\epsilon$ to the minimal positive integer $m(\epsilon)$ such that for any target concept $c \in \mathcal{C}$ and any $m \ge m(\epsilon)$, with probability at least $2/3$ over the drawn of an i.i.d. sample $S=((x_1,c(x_1)),\ldots,(x_m,c(x_m)))$, the output $\mathcal{A}(S)$ is $\epsilon$-accurate.\footnote{Given a confidence parameter $\delta > 2/3$, standard amplification techniques can be used to ensure that the probability error is at most $\delta$, while increasing the sample complexity by at most a $\log(1/\delta)$ multiplicative factor.} 

\paragraph{The statistical query framework.} 
The statistical query (SQ) framework has been introduced by  \cite{kearns98} to handle random noise in the PAC setting. In this model, instead of having access to an i.i.d. sequence of labeled instances, the learner has access to a \textit{statistical query oracle} (a.k.a. \textit{correlation oracle}). Each call to the oracle has the form of a pair $(h,\tau)$, where $h \in \{-1,1\}^\mathcal{X}$ is a hypothesis and $\tau > 0$ is called a \textit{tolerance} parameter.
The oracle has to answer such a query with a scalar $\nu$ satisfying\footnote{According to the original framework of Kearns, (seemingly) more general queries are allowed. Namely, each query is a pair $(\chi,\tau)$ where $\chi:\mathcal{X} \times \{-1,1\} \rightarrow \{-1,1\}$. The oracle has to answer the query with a scalar $\nu$ satisfying
\[
|\mathbb{E}_{x \sim P}[\chi(x,c(x))] - \nu| \le \tau~.
\]
Note that $\chi(x,c(x))$ can be written as a polynomial in $x$ and $c(x)$, and since $c(x)$ is either $1$ or $-1$, this polynomial is linear in $c(x)$. In other words, $\chi(x,c(x))=g_1(x)c(x)+g_2(x)$. given that the distribution $P$ is known, $\mathbb{E}_{x \sim P}[g_2(x)]$ can be calculated. Thus, one can simulate the seemingly more general query $\chi$ using the correlation query applied to $g_1$.
}  
\[
|\langle h,c \rangle_P - \nu| \le \tau \quad \textrm{where} \quad \langle h,c \rangle_P :=\mathbb{E}_{x \sim P} [h(x)c(x)].
\]
As was shown in \cite{kearns98}, any approximately accurate algorithm in the SQ model can be efficiently transformed into an approximately accurate PAC algorithm, i.e. an algorithm that has access to i.i.d. labeled examples. The resulted PAC is also robust to noise. We refer to \cite{szorenyi09} for additional background. 

Analogously to the definition of sample complexity, the \textit{query complexity} of a learning algorithm in the SQ model, denoted $q_\tau(\epsilon)$, is the minimal number of queries with tolerance parameter $\tau$ required for achieving $\epsilon$-accurate prediction (for any target concept $c \in \mathcal{C}$).

\paragraph{SQ dimension.}
The SQ-dimension defined
below is useful for characterizing weak learnability in the statistical
query framework, as was proved in \cite{blum94} (see \cref{prop:weakSQ} and \cref{prop:SQ_lower_bound}). 
\begin{defn}[Statistical query dimension]\label{def:SQdim} 
Fix a probability distribution $P$ over $\mathcal{X}$.
The SQ-dimension of the class $\mathcal{C}$ with respect to the distribution $P$, denoted $\mathrm{SQ}_{P}(\mathcal{C})$,
is the maximal integer $d$ such that there exist $h_{1},\ldots,h_{d} \in \mathcal{C}$
satisfying $|\langle h_{i},h_{j}\rangle_{P}| \le1/d$
for all $i\neq j \in [d]$. 
\end{defn}


\paragraph{Bounded memory learning.}
A bounded memory learning algorithm observes a sequence of labeled examples $(x_1,y_1),(x_2,y_2),\ldots$ in a streaming fashion, where $x_i \in \mathcal{X}, y_i \in \{-1,1\}$. We assume in this paper that the data is realizable, namely $y_i=c(x_i)$ for some concept $c \in \mathcal{C}$.
The algorithm maintains a state $Z_t \in \{0,1\}^b$ after seeing the first $t$ examples, and update it after seeing the next example to $Z_{t+1} = \psi_t(Z_t,(x_{t+1},y_{t+1}))$ using some update function $\psi_t$.\footnote{Following the model of branching programs (e.g., \cite{raz16}), the maps $\psi_1,\psi_2,\ldots$ are not considered towards the space complexity of the algorithm.}   The parameter $b$ is called the \textit{bit complexity} of the algorithm. Finally, after observing $m$ samples (where $m$ is a parameter tuned by the algorithm), a hypothesis $h=\phi(Z_m)$ is returned. 

We now expand the two ``trivial'' learning algorithms described earlier to accuracy $1-\epsilon$ for any $\epsilon>0$:
\begin{enumerate}
\item 
We can learn with accuracy $1-\epsilon$ using $m=O\left(\log|\mathcal{C}| \mathrm{poly}(1/\epsilon)\right)$ samples and number of bits equal to $b=O\left(\log|\mathcal{C}|\log|\mathcal{X}|+\log|\mathcal{C}| \log(1/\epsilon)\right)$. For constant accuracy parameter this can be done by saving $O(\log|C|)$ examples and applying ERM. To achieve better accuracy we can apply Boosting-By-Majority \cite{freund95} as we describe in \cref{sec:bounded_sq_implies_bounded_memory}. 
\item
One can always learn with $m=O(|\mathcal{C}|\log|\mathcal{C}|\epsilon^{-1})$ samples and $b=O(\log|\mathcal{C}|)$ bits, by going over all possible hypothesis and testing if the current hypothesis is accurate on a few random samples. 
\end{enumerate}

We define a class $\mathcal{C}$ to be bounded memory learnable if there is a learning algorithm that beats both of the above learning algorithms. 

\begin{defn}[Bounded memory learnable classes]
\label{dfn:bounded_memory_learning}
A class $\mathcal{C}$ under a distribution $P$ is \emph{learnable with bounded memory} with accuracy $1-\epsilon$ if there is a learning algorithm that uses only $m=\left(|\mathcal{C}|/\epsilon\right)^{o(1)}$ samples and
 $b=o(\log|\mathcal{C}|(\log|\mathcal{X}|+\log(1/\epsilon)))$ bits\footnote{Formally, the $o(\cdot)$ factors are in terms of the size of the class $\mathcal{C}$. Hence this definition applies to families of distributions $\{\mathcal{C}_n\}$ of growing size, for example parities on $n$ bits. However, in the main theorems we give quantitative bounds and hence can focus on single classes instead of families of classes.}. 
\end{defn}

To illustrate this, consider the case where the number of concepts and points are polynomially related, $|\mathcal{C}|,|\mathcal{X}|=\text{poly}(N)$, and where the desired error in not too tiny, $\epsilon \ge 1/\text{poly}(N)$. Then a non-trivial learning algorithm is one that uses a sub-polynomial number of samples $m=N^{o(1)}$ and a sub-quadratic number of bits $b=o(\log^2 N)$. There are classes that can not be learned with bounded memory.

\begin{example}[Learning parities]
Consider the task of learning parities on $n$ bits. Concretely, let $N=2^n$,  $\mathcal{X}=\mathcal{C}=\{0,1\}^n$, $P$ be the uniform distribution over $\mathcal{X}$, and let the label associated with a concept $c \in \mathcal{C}$ and point $x \in \mathcal{X}$ be $\langle c,x \rangle~(\mathrm{mod}~2)$. It was shown by \cite{raz16,moshkovitz18} that achieving constant accuracy for this task requires either $b=\Omega(n^2)=\Omega(\log^2 N)$ bits of memory or an exponential in $n$ many samples, namely $m=2^{\Omega(n)}=N^{\Omega(1)}$ samples.
\end{example}

\paragraph{Close distributions.}
An important ingredient in this work is the notion of nearby distributions, where the distance is measured by the multiplicative gap between the probabilities of elements.

\begin{defn}[$\mu$-close distributions]
\label{def:muclose}
We say that two distributions $P,Q$ on $\mathcal{X}$ are $\mu$-close for some $\mu\geq 1$ if 
$\mu^{-1} P(x) \le Q(x)\le \mu P(x)$ for all $x\in\mathcal{X}$. Note that the definition is symmetric with respect to $P,Q$.
We denote the set of all distributions that are $\mu$-close to $P$ by $\mathcal{P}_\mu(P)$. 
\end{defn}

\subsection{Main results}\label{sec:main_results}
\paragraph{Bounded memory PAC learning.}

We state our main results for a combinatorial characterization of bounded memory PAC learning in terms of the SQ dimension of distributions close to the underlying distribution.

\begin{thm} \label{thm:sq2bmPAC}
Let $\epsilon \in (0,1)$, $d \in \mathbb{N}$ and denote by $\mu = \Theta (\max \{d, 1/\epsilon^3 \})$. Suppose that the distribution $P$ satisfies the following condition: for any distribution $Q \in\mathcal{P}_{\mu}(P)$, $\mathrm{SQ}_{Q}(\mathcal{C})\le d$. Then there exists an algorithm that learns the class $\mathcal{C}$ with accuracy $1-\epsilon$ under the distribution $P$ using $b=O(\log (d/\epsilon) \cdot\log\vert\mathcal{C}\vert)$
bits and $m=\mathrm{poly}(d/\epsilon) \cdot \log(|\mathcal{C}|) \cdot \log\log(|\mathcal{C}|)$ samples. 
\end{thm}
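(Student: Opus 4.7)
My plan is to combine the weak learner implied by the SQ-dimension bound with a boosting procedure, where the critical observation is that the reweighted distributions visited during boosting all lie inside $\mathcal{P}_\mu(P)$, so the SQ-dimension hypothesis applies uniformly throughout the execution.

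\textbf{Weak learner from the SQ dimension.} The hypothesis $\mathrm{SQ}_P(\mathcal{C})\le d$ gives a weak SQ learner of advantage $\Omega(1/d)$ (cf.\ \cref{prop:weakSQ}): one scans the class $\mathcal{C}$ and outputs some $h\in\mathcal{C}$ with $|\langle h,c\rangle_P|$ noticeably larger than the ``baseline'' pairwise correlation $1/d$. I would implement this as a PAC algorithm by (i) enumerating candidates $h\in\mathcal{C}$, (ii) estimating $\langle h,c\rangle_P$ from $\mathrm{poly}(d)$ fresh samples per candidate, and (iii) keeping in memory only an index into $\mathcal{C}$ and a running correlation estimator, which takes $O(\log|\mathcal{C}|)$ bits. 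A $\log\log|\mathcal{C}|$ amplification plus a union bound over candidates yields a weak PAC learner with the claimed per-round sample complexity and memory.

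\textbf{Boosting while staying in the $\mu$-neighborhood.} I would then apply Boost-By-Majority~\cite{freund95} for $T$ rounds. At round $t$ the current distribution is $P_t(x)\propto P(x)w_t(x)$, where the weights $w_t(x)$ depend on the margin of $x$ under the already computed weak hypotheses $h_1,\dots,h_{t-1}$. The heart of the argument is to verify that a truncated weight scheme (capping $w_t$ from above and below) keeps $P_t\in\mathcal{P}_\mu(P)$ for $\mu=\Theta(\max\{d,1/\epsilon^3\})$; the cubic dependence on $1/\epsilon$ absorbs the slack needed for smoothing past boosting rounds. Given this, the hypothesis guarantees $\mathrm{SQ}_{P_t}(\mathcal{C})\le d$ at every round, so Step~1 gives a weak learner for $P_t$. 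Sampling from $P_t$ in a streaming fashion is implemented by rejection from $P$, at a cost of a factor of $\mu$ in samples.

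\textbf{Memory and sample accounting.} The output of boosting is a majority vote of $T$ weak hypotheses, which I store explicitly in $T\log|\mathcal{C}|$ bits; each round's weak learner uses $O(\log|\mathcal{C}|)$ bits of scratch space that I reuse across rounds. Tuning Boost-By-Majority so that each boosting round first amplifies the $\Omega(1/d)$ weak advantage to a \emph{constant} advantage (by repeating the weak learner $\mathrm{poly}(d)$ times within the round and taking a majority vote) allows $T=O(\log(d/\epsilon))$ outer rounds, for a total of $b=O(\log(d/\epsilon)\cdot\log|\mathcal{C}|)$ bits and $m=\mathrm{poly}(d/\epsilon)\cdot\log|\mathcal{C}|\cdot\log\log|\mathcal{C}|$ samples, matching the statement.

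\textbf{Main obstacle.} The delicate point is coordinating the boosting round complexity with the closeness parameter $\mu$: a vanilla Boost-By-Majority with advantage $1/d$ would use $\Omega(d^2\log(1/\epsilon))$ rounds and blow the memory budget. Compressing this to $O(\log(d/\epsilon))$ rounds requires amplifying the weak advantage to a constant via internal sub-routines whose distributions \emph{themselves} must remain in $\mathcal{P}_\mu(P)$. This is exactly why the hypothesis must control $\mathrm{SQ}_Q(\mathcal{C})$ on the full $\mu$-neighborhood rather than merely at $P$, and I expect the main technical effort to go into verifying that the truncated reweightings used both inside the inner amplification and across outer boosting rounds fit jointly within the same $\mu$-neighborhood.
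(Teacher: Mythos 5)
Your high-level plan (a weak learner from the SQ dimension, followed by Boost-By-Majority) matches the paper's starting point, but two of your key steps do not go through.

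First, your ``critical observation'' that the reweighted distributions $P_t$ stay inside $\mathcal{P}_\mu(P)$ is false for Boost-By-Majority: \cref{prop:bbm} gives only the one-sided bound $P_t(x) \le (C/\epsilon^3)\,P(x)$, and points on which the current majority already has a large margin receive weight arbitrarily close to $0$, so no lower bound of the form $P_t(x) \ge P(x)/\mu$ holds. Truncating the weights from below, as you suggest, would alter the BBM invariant and is not justified. The paper instead proves only the weaker statement that $\mathrm{SQ}_{P_t}(\mathcal{C}) \le 4d$ (\cref{lem:upperSQdimBBM}), by passing to the mixture $\tilde{P}_t = \mu^{-1}P + (1-\mu^{-1})P_t$, which \emph{does} lie in $\mathcal{P}_\mu(P)$, and showing that a large pairwise near-uncorrelated family under $P_t$ would yield one under $\tilde{P}_t$. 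This gap is repairable, but the mixture idea is genuinely missing from your argument.

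Second, and more seriously, your memory accounting fails. You cannot amplify the $\Omega(1/d)$ advantage of a weak learner to a constant by running it $\mathrm{poly}(d)$ times on the same distribution and taking a majority vote: independent repetitions amplify confidence, not accuracy; improving the advantage requires reweighting, i.e., an inner boosting with $\Omega(d^2)$ stages of its own. So the final hypothesis is unavoidably a majority over $T = \Omega(d^2\log(1/\epsilon))$ base hypotheses, and storing them explicitly costs $\Omega(d^2\log(1/\epsilon)\cdot\log|\mathcal{C}|)$ bits --- exactly the blow-up you flag as ``the main obstacle,'' which your proposal does not actually overcome. The paper avoids storing the weak hypotheses altogether: it first runs the entire boosting in the SQ model (using \cref{prop:aslam} to simulate queries to $P_t$ by queries to $P$), obtaining an SQ learner with $q = \mathrm{poly}(d/\epsilon)$ queries and tolerance $\tau \ge \mathrm{poly}(\epsilon/d)$ (\cref{thm:sq2bmSQ}), and only then invokes the generic SQ-to-bounded-memory simulation of \cref{prop:steinhardt}, whose bit complexity is $O(\log|\mathcal{C}|\cdot\log(q/\tau)) = O(\log|\mathcal{C}|\cdot\log(d/\epsilon))$ regardless of the number of boosting rounds. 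Without some such indirection your approach cannot reach the claimed bit complexity.
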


\begin{thm}\label{thm:bm2sqPAC}
If a class $\mathcal{C}$ is strongly PAC-learnable under $P$ with accuracy $1-0.1\epsilon$ using $b$ bits and $m$ samples, then for every distribution $Q \in \mathcal{P}_{1/\epsilon}(P)$, its SQ-dimension is bounded by  $SQ_{Q}(\mathcal{C})\leq\max\left(\mathrm{poly}(m/\epsilon),2^{O(\sqrt{b})}\right)$. 
\end{thm}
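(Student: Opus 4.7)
The plan is to reduce the hypothesized $P$-learner to a PAC learner under $Q$ via rejection sampling, and then apply a Raz-style space-bounded lower bound driven by the near-orthogonal witnesses of $\mathrm{SQ}_Q(\mathcal{C})$.

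To convert to a $Q$-learner, I would use that $Q \in \mathcal{P}_{1/\epsilon}(P)$ implies $P(x)/Q(x) \in [\epsilon, 1/\epsilon]$. Given an i.i.d.\ stream of labeled examples from $Q$, I accept each $(x, c(x))$ with probability $\epsilon \cdot P(x)/Q(x) \in [\epsilon^2, 1]$; conditional on acceptance, $x$ is distributed as $P$. By a Chernoff bound, $O(m/\epsilon)$ samples from $Q$ suffice to produce $m$ accepted samples with high probability, at the cost of only $O(\log(m/\epsilon))$ additional bits of memory (a counter tracking how many samples have been accepted, together with an on-the-fly threshold comparison). Moreover $Q(x) \le P(x)/\epsilon$ gives $L_{Q, c}(h) \le L_{P, c}(h)/\epsilon$, so a hypothesis with $P$-error at most $0.1\epsilon$ has $Q$-error at most $0.1$. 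Feeding the accepted samples into the assumed $P$-learner thus yields a PAC learner for $\mathcal{C}$ under $Q$ with constant accuracy, bit complexity $b' = b + O(\log(m/\epsilon))$, and sample complexity $m' = O(m/\epsilon)$.

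Next, I would invoke a branching-program lower bound on such $Q$-learners. Let $d = \mathrm{SQ}_Q(\mathcal{C})$ with near-orthogonal witnesses $h_1, \ldots, h_d \in \mathcal{C}$; the condition $|\langle h_i, h_j\rangle_Q| \le 1/d$ implies that the pairwise $Q$-disagreement of distinct $h_i$'s is at least $(1-1/d)/2$, so any hypothesis with $Q$-error at most $0.1$ uniquely determines which $h_i$ equals $c$ via a nearest-neighbor rule. Hence the $Q$-learner solves the identification task on $\{h_1,\dots,h_d\}$ with $x \sim Q$. Instantiating the extractor-based memory-sample framework of \cite{raz16,moshkovitz18} for the near-orthogonal matrix $M_{i,x} = h_i(x)$ weighted by $Q$: any length-$m'$, memory-$b'$ branching program solving this identification task must satisfy either $b' = \Omega(\log^2 d)$ or $m' = d^{\Omega(1)}$. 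Substituting $b' = b + O(\log(m/\epsilon))$ and $m' = O(m/\epsilon)$ and rearranging gives $d \le \max(\mathrm{poly}(m/\epsilon),\, 2^{O(\sqrt{b})})$, as required.

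The main obstacle is the third step: quantitatively instantiating the Raz-style lower bound for an arbitrary class of near-orthogonal concepts under an arbitrary distribution $Q$. The existing branching-program technology is developed primarily for specific classes (e.g.\ parities under the uniform distribution), and distilling the $2^{O(\sqrt{b})}$ regime from just the abstract $|\langle h_i, h_j\rangle_Q| \le 1/d$ bound is the technical heart of the argument; the remaining ingredients (rejection sampling, accuracy transfer, identification from the PAC output) are standard and should not cause difficulty.
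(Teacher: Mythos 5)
Your proposal follows essentially the same route as the paper: rejection sampling with acceptance probability $\epsilon\, P(x)/Q(x)$ to turn the $P$-learner into a constant-accuracy $Q$-learner, then a memory--sample lower bound for identifying the target among the $d$ near-orthogonal witnesses of $\mathrm{SQ}_Q(\mathcal{C})$. The step you flag as the main obstacle --- instantiating a Raz-style bound for an arbitrary near-orthogonal family under an arbitrary $Q$ --- is not something you need to develop: it is exactly Corollary 8 of Garg--Raz--Tal \cite{garg18}, which the paper imports as a black box (any learner returning the exact correct hypothesis with probability $\Omega(1/m)$ for a class of SQ dimension $d$ needs $m=d^{\Omega(1)}$ samples or $b=\Omega(\log^2 d)$ bits), so your argument closes once you cite it. Two small points: the acceptance probability is only lower bounded by $\epsilon^2$, so you need $O(m/\epsilon^2)$ (not $O(m/\epsilon)$) samples from $Q$, which is immaterial to the final $\mathrm{poly}(m/\epsilon)$ bound; and your nearest-neighbor decoding of the output hypothesis is the analogue of the paper's explicit improper-to-proper conversion (its Lemma on proper learning), which costs only $O(\log(|\mathcal{C}|/\epsilon))$ extra bits and is likewise absorbed into the final bound.
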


There is a regime of parameters where the upper and lower bounds match. Let $|\mathcal{C}|,|\mathcal{X}|=\text{poly}(N)$ and that $\epsilon=N^{-o(1)}$. Recall that the class is bounded memory learnable if there is a learning algorithm with sample complexity $m=N^{o(1)}$ and space complexity $b=o(\log^2 N)$. Let $\mu,d=N^{o(1)}$. We have the following equivalence:
\textit{\begin{center}
$\mathcal{C}$ is bounded memory learnable under $P$ with accuracy $1-\epsilon$\\ 
$\Updownarrow$\\ 
$\forall Q\in\mathcal{P}_{\textrm{poly}(1/\epsilon)}(P)$, $\mathrm{SQ}_{Q}(\mathcal{C})\le poly(1/\epsilon)\;$.
\end{center}}


We conjecture that this equivalence holds for any $\epsilon$.

\begin{conj}\label{conj:main}
For any $\epsilon$, the class $\mathcal{C}$ is bounded memory learnable under distribution $P$ with accuracy $1-\epsilon$ $\Longleftrightarrow$ $\forall Q\in\mathcal{P}_{\textrm{poly}(1/\epsilon)}(P)$, $\mathrm{SQ}_{Q}(\mathcal{C})\le poly(1/\epsilon)$.
\end{conj}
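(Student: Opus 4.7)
The plan is to invoke \cref{thm:sq2bmPAC} and \cref{thm:bm2sqPAC} for the two directions of the conjecture, and then to bridge the gap by sharpening their quantitative guarantees outside the matching parameter regime identified in \cref{sec:main_results}. Both theorems already yield the conjecture in the regime where $\log(1/\epsilon)$ is subpolynomial in both $\log|\mathcal{C}|$ and $\log|\mathcal{X}|$; the remaining work is to extend each direction to arbitrary $\epsilon$.

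For the $(\Leftarrow)$ direction, assuming $\mathrm{SQ}_Q(\mathcal{C}) \le \mathrm{poly}(1/\epsilon)$ for every $Q \in \mathcal{P}_{\mathrm{poly}(1/\epsilon)}(P)$, I would first apply \cref{thm:sq2bmPAC} with $d = \mathrm{poly}(1/\epsilon)$ (so that $\mu = \Theta(\max\{d,1/\epsilon^3\})$ also lies in $\mathrm{poly}(1/\epsilon)$), obtaining a learner using $b = O(\log(1/\epsilon)\log|\mathcal{C}|)$ bits and $m = \mathrm{poly}(1/\epsilon)\cdot\log|\mathcal{C}|\cdot\log\log|\mathcal{C}|$ samples. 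A direct check shows these parameters satisfy \cref{dfn:bounded_memory_learning} whenever $\log(1/\epsilon) = o(\min(\log|\mathcal{X}|,\log|\mathcal{C}|))$. To cover the complementary regime $\epsilon \le 1/|\mathcal{C}|^{\Omega(1)}$, the goal is to refine the internal boosting routine used inside \cref{thm:sq2bmPAC} so its sample complexity depends on $\mathrm{polylog}(1/\epsilon)$ rather than $\mathrm{poly}(1/\epsilon)$; a smooth-boosting variant that keeps only $O(\log(1/\epsilon))$ weak hypotheses in memory at a time and re-uses a shared sample stream to estimate all required correlations seems like a natural candidate.

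For the $(\Rightarrow)$ direction I would start from \cref{thm:bm2sqPAC}, which yields $\mathrm{SQ}_Q(\mathcal{C}) \le \max(\mathrm{poly}(m/\epsilon), 2^{O(\sqrt{b})})$ for every $Q \in \mathcal{P}_{1/\epsilon}(P)$. Two improvements are required. First, the family of close distributions must be widened from $\mathcal{P}_{1/\epsilon}(P)$ to $\mathcal{P}_{\mathrm{poly}(1/\epsilon)}(P)$, which I would attempt by absorbing the extra multiplicative distortion into the accuracy loss when translating an SQ-query under $Q$ into one under $P$ via the known Radon--Nikodym trick. Second, and far more critically, the $2^{O(\sqrt{b})}$ term must be improved to $\mathrm{poly}(1/\epsilon)$, since under the bounded-memory learnability hypothesis $b$ can be as large as $\log|\mathcal{C}|(\log|\mathcal{X}|+\log(1/\epsilon))/\omega(1)$. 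This is the main obstacle: eliminating the $\sqrt{b}$ is essentially a Raz-style memory lower bound that must simultaneously rule out learning large nearly-orthogonal sub-classes with respect to many distributions $Q \in \mathcal{P}_{\mathrm{poly}(1/\epsilon)}(P)$, rather than against a single distribution as in \cite{raz16,moshkovitz18}. My tentative attack would be to extend the progress-function / branching-program framework to a joint distribution over pairs $(c,Q)$, arguing that no bounded-memory branching program can simultaneously make progress on enough nearly orthogonal concepts drawn from all close distributions at once. Obtaining such a distribution-family memory lower bound appears to require genuinely new techniques, and I regard this as the principal barrier separating the conjecture from what \cref{thm:bm2sqPAC} currently delivers.
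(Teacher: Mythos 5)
This statement is \cref{conj:main}, which the paper explicitly leaves open: the authors prove the equivalence only in the restricted regime $|\mathcal{C}|,|\mathcal{X}|=\mathrm{poly}(N)$, $\epsilon, \mu, d = N^{-o(1)}$ resp.\ $N^{o(1)}$ via \cref{thm:sq2bmPAC} and \cref{thm:bm2sqPAC}, and offer no proof for general $\epsilon$. Your proposal correctly identifies the two directions, the relevant theorems, and the two quantitative obstructions (the $\mathrm{poly}(1/\epsilon)$ sample dependence in the boosting-based upper bound, and the $2^{O(\sqrt{b})}$ term in the lower bound). But it is a research plan, not a proof: every step that goes beyond what \cref{thm:sq2bmPAC} and \cref{thm:bm2sqPAC} already give is flagged by you as a ``natural candidate,'' a ``tentative attack,'' or something that ``appears to require genuinely new techniques.'' Nothing in the proposal closes either gap, so the conjecture remains exactly as open after your argument as before it.

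One concrete warning about the $(\Rightarrow)$ direction as you have framed it. You propose to ``improve the $2^{O(\sqrt{b})}$ term to $\mathrm{poly}(1/\epsilon)$,'' but the paper's sparse-parity example (parities on the first $k$ of $n$ bits, $k=\sqrt{n}$) shows that the $2^{O(\sqrt{b})}$ term in \cref{thm:bm2sqPAC} is \emph{tight}: that class is learnable with $b=\Theta(k^2)$ bits yet has SQ dimension $2^k = 2^{\Theta(\sqrt{b})}$. So a blanket improvement of the form you describe is false. Any correct argument must instead exploit the polynomial relationship between $|\mathcal{C}|$ and $|\mathcal{X}|$ that is implicit in \cref{dfn:bounded_memory_learning} (which is precisely what rules out the sparse-parity counterexample), and your proposal does not engage with where that relationship would enter a Raz-style branching-program argument. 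Until that is addressed, the $(\Rightarrow)$ direction of your plan starts from a step that cannot succeed as stated.
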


\paragraph{SQ learning.}
Next, we give our secondary results for SQ learning, which are very similar to our results for bounded memory learning. Conceptually, it shows that the two notions are tightly connected. 

\begin{thm} \label{thm:sq2bmSQ}
Let $\epsilon \in (0,1)$, $d \in \mathbb{N}$ and denote by $\mu = \Theta (\max \{d, 1/\epsilon^3\})$. Suppose that the distribution $P$ satisfies the following condition: for any distribution $Q \in\mathcal{P}_{\mu}(P)$, $\mathrm{SQ}_{Q}(\mathcal{C})\le d$. Then there exists an SQ-learner that learns the class $\mathcal{C}$ with accuracy $1-\epsilon$ under the distribution $P$ using $q=\mathrm{poly}(d/\epsilon)$ statistical queries with tolerance  $\tau \ge \mathrm{poly}(\epsilon/d)$.
\end{thm}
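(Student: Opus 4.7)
The plan is to reduce strong SQ learning to weak SQ learning via boosting on reweighted distributions that remain inside $\mathcal{P}_\mu(P)$. First, the hypothesis that $\mathrm{SQ}_Q(\mathcal{C}) \le d$ for every $Q \in \mathcal{P}_\mu(P)$ gives, via Proposition \ref{prop:weakSQ}, a weak SQ learner under each such $Q$ that achieves correlation $\gamma = \Omega(1/d)$ with the target concept using $\mathrm{poly}(d)$ statistical queries of tolerance $\Omega(1/d)$. This produces, at every boosting round, a hypothesis $h_t$ with $\langle h_t, c \rangle_{Q} \ge \gamma$, provided the relevant reweighted distribution $Q$ lies in $\mathcal{P}_\mu(P)$.

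Next, I would drive the boosting with a smooth boosting algorithm (in the spirit of Servedio's smooth boosting) whose reweighting at round $t$ satisfies $D_t(x) \le O(1/\epsilon)\,P(x)$ pointwise. To obtain a lower bound on $D_t(x)/P(x)$ as well, I would work with the mixed distribution $\tilde{D}_t := (1-\epsilon) D_t + \epsilon P$, so that $\tilde{D}_t$ is $O(1/\epsilon)$-close to $P$ on both sides and therefore lives in $\mathcal{P}_\mu(P)$ by the choice $\mu = \Theta(\max\{d, 1/\epsilon^3\})$. Since $\tilde{D}_t$ and $D_t$ differ in total variation by $O(\epsilon)$, the weak learner's correlation on $\tilde{D}_t$ still exceeds $\gamma/2$ (as $\gamma = \Omega(1/d) \gg \epsilon$ in the relevant regime), so the boosting potential argument goes through and $T = O(\log(1/\epsilon)/\gamma^2) = \mathrm{poly}(d)\log(1/\epsilon)$ rounds suffice to reach accuracy $1-\epsilon$.

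The learner is required to interact only with the SQ oracle for $P$, so I need to simulate queries against $\tilde{D}_t$ from queries against $P$. Writing $\langle h, c \rangle_{\tilde{D}_t} = \mathbb{E}_{x\sim P}\bigl[(\tilde{D}_t(x)/P(x))\, h(x) c(x)\bigr]$, and using that $\tilde{D}_t/P$ is a known function bounded by $\mu$, one can rescale the query into a correlation query under $P$; a query of tolerance $\tau$ under $\tilde{D}_t$ is then answered by a $P$-query of tolerance $\tau/\mu$. Setting $\tau = \Theta(1/d)$, this costs tolerance $\mathrm{poly}(\epsilon/d)$ under $P$. Multiplying the $\mathrm{poly}(d)$ queries per round by $T$ rounds gives total query complexity $\mathrm{poly}(d/\epsilon)$, matching the statement.

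The main obstacle is the two-sided closeness requirement: standard smooth boosting only controls the upper side $D_t/P$, so the mixing trick and the verification that the degraded correlation and tolerance still suffice is the delicate part. The parameter $\mu = \Theta(\max\{d, 1/\epsilon^3\})$ looks calibrated exactly to balance three competing constraints --- the weak learning advantage $1/d$, the mixing error $\epsilon$, and the tolerance amplification factor incurred by the $P$-to-$\tilde{D}_t$ simulation --- so getting all three to line up within a single choice of $\mu$ is where the real work of the proof lies.
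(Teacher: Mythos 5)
Your overall architecture matches the paper's: boost a weak learner obtained from \cref{prop:weakSQ}, use a mixture with $P$ to convert the one-sided smoothness of the booster's distributions into two-sided $\mu$-closeness, and simulate queries against the reweighted distributions via queries to the $P$-oracle (the paper uses SQ-Boost-By-Majority with \cref{prop:bbm}, \cref{lem:upperSQdimBBM}, and \cref{prop:aslam}). However, there is a genuine quantitative gap in your mixing step. You mix with weight $\epsilon$, i.e.\ $\tilde{D}_t = (1-\epsilon)D_t + \epsilon P$, run the weak learner on $\tilde{D}_t$, and transfer its $\gamma = \Omega(1/d)$ correlation back to $D_t$ using the fact that the two distributions are $O(\epsilon)$-close in total variation. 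That transfer loses $O(\epsilon)$ in correlation, and your justification that this is harmless --- ``$\gamma = \Omega(1/d) \gg \epsilon$ in the relevant regime'' --- is false in a regime the theorem must cover: for constant $\epsilon$ and large $d$ one has $1/d \ll \epsilon$, and the transferred correlation bound $\Omega(1/d) - O(\epsilon)$ is vacuous. The fix is to mix with weight $\delta = 1/\mu = \Theta(\min\{1/d,\epsilon^3\})$ instead of $\epsilon$; the mixture still lies in $\mathcal{P}_\mu(P)$ (its density is at least $P/\mu$ pointwise), and now the perturbation is $O(1/d)$, safely below $\gamma/2$. This is exactly how the paper's \cref{lem:upperSQdimBBM} calibrates the mixture --- though note the paper avoids the correlation-transfer step altogether by using the mixture only to bound $\mathrm{SQ}_{P_t}(\mathcal{C}) \le 4d$ and then running the weak learner on $P_t$ itself.

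A second, more minor issue: your simulation rescales the query by the known density ratio $\tilde{D}_t/P$, but the SQ oracle in this paper only accepts $\{-1,1\}$-valued queries, and the rescaled function is merely bounded in $[-1,1]$. This is repairable by decomposing the bounded function into an average of $O(\mu/\tau)$ sign-valued queries (the trick of \cref{lem:bm2sq_rejection_sampling_sq}), which keeps the query count at $\mathrm{poly}(d/\epsilon)$; the paper instead invokes \cref{prop:aslam}, which exploits the piecewise-constant structure of the BBM weights. You should either state the decomposition explicitly or cite a simulation lemma, since as written the query type does not match the oracle's interface.
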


\begin{thm} \label{thm:bm2sqSQ}
If a class $\mathcal{C}$ is strongly SQ-learnable under $P$ with accuracy $1-0.1\epsilon$, $q$ statistical queries, and tolerance $\tau$, then for every distribution $Q \in \mathcal{P}_{1/\epsilon}(P)$,  $SQ_{Q}(\mathcal{C})\leq\mathrm{poly}(q/\epsilon\tau)$. 
\end{thm}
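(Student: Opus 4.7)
}
The approach is to reduce, at any given $Q\in\mathcal{P}_{1/\epsilon}(P)$, the hypothesized SQ-learner for $\mathcal{C}$ under $P$ to an SQ-learner for $\mathcal{C}$ under $Q$, and then to invoke a standard SQ lower bound (Proposition \ref{prop:SQ_lower_bound}) at $Q$. The reduction rests on two observations, one about accuracy and one about simulating queries; both exploit the multiplicative closeness of $P$ and $Q$.

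First, I would record an accuracy transfer. Since $Q\in\mathcal{P}_{1/\epsilon}(P)$ gives $Q(x)\le P(x)/\epsilon$ pointwise, for every hypothesis $h$ and target concept $c$ one has $L_{Q,c}(h)\le L_{P,c}(h)/\epsilon$. In particular, any hypothesis of $P$-error at most $0.1\epsilon$ has $Q$-error at most $0.1$.

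Second, I would show that any $P$-SQ query $(h,\tau)$ can be simulated by a single $Q$-SQ query of tolerance $\tau\epsilon$. Define the generalized query function
\[
\chi(x,y) \;:=\; \epsilon\cdot\frac{P(x)}{Q(x)}\cdot h(x)\cdot y.
\]
The closeness $P(x)/Q(x)\le 1/\epsilon$ gives $\chi(x,y)\in[-1,1]$, so $\chi$ is an admissible Kearns-style SQ query (cf.\ the footnote defining the model). Both $P$ and $Q$ are known to the learner, so $\chi$ is computable. Furthermore,
\[
\mathbb{E}_{x\sim Q}[\chi(x,c(x))] \;=\; \epsilon\cdot\mathbb{E}_{x\sim P}[h(x)c(x)] \;=\; \epsilon\,\langle h,c\rangle_P,
\]
so dividing the $Q$-oracle's answer by $\epsilon$ yields an estimate of $\langle h,c\rangle_P$ within tolerance $\tau$, as required by the $P$-learner.

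Combining the two ingredients, the assumed $P$-SQ learner becomes a $Q$-SQ learner using the same $q$ queries, now with tolerance $\tau\epsilon$, and outputting a hypothesis of $Q$-accuracy at least $0.9$. Invoking the standard SQ lower bound for strong SQ learning at $Q$ (Proposition \ref{prop:SQ_lower_bound}) then yields $\mathrm{SQ}_{Q}(\mathcal{C})\le\mathrm{poly}(q/(\epsilon\tau))$, which is exactly the claimed bound. The only mildly subtle step is ensuring that the importance-weighted query $\chi$ lies in $[-1,1]$: this is precisely where the $1/\epsilon$-closeness is used, and it is also what forces the tolerance loss of a factor $\epsilon$ that propagates into the final polynomial. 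Once this is in place, the rest of the argument is routine bookkeeping around the existing SQ lower bound.
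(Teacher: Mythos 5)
Your overall strategy matches the paper's: transfer accuracy from $P$ to $Q$ via the pointwise bound $Q(x)\le P(x)/\epsilon$, simulate the $P$-learner's queries by importance-weighted queries under $Q$, and then invoke Proposition~\ref{prop:SQ_lower_bound} at $Q$. The accuracy-transfer step and the identity $\mathbb{E}_{x\sim Q}\left[\epsilon\tfrac{P(x)}{Q(x)}h(x)c(x)\right]=\epsilon\langle h,c\rangle_P$ are both correct and are exactly what the paper uses in Lemma~\ref{thm:bm2sq_rejection_sampling_sq}.

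The gap is in the claim that $\chi(x,y)=\epsilon\tfrac{P(x)}{Q(x)}h(x)y$ is ``an admissible Kearns-style SQ query.'' In the model as defined in the paper (both the main text and the footnote you cite), query functions must take values in $\{-1,1\}$, not merely in $[-1,1]$; your $\chi$ is genuinely real-valued, so it is not a legal query, and Proposition~\ref{prop:SQ_lower_bound} --- which you need to apply to the resulting $Q$-learner --- is stated for that Boolean-query model. This is precisely the obstacle the paper's proof confronts (``the problem is that the range of $\psi'$ is not $\{-1,1\}$''), and it resolves it by approximating $\epsilon\psi'(x)$ pointwise by an average of $n=O(1/\epsilon\tau)$ Boolean functions $\psi_1,\ldots,\psi_n$ (Lemma~\ref{lem:bm2sq_rejection_sampling_sq}), paying a factor of $O(1/\epsilon\tau)$ in query complexity while keeping tolerance $\epsilon\tau/2$; the final bound is still $\mathrm{poly}(q/\epsilon\tau)$. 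Your argument can be repaired either by inserting that decomposition or by proving (or citing) that the SQ lower bound extends with the same quantitative form to $[-1,1]$-valued queries; but as written, the step from ``real-valued importance-weighted query'' to ``legal query against the oracle to which Proposition~\ref{prop:SQ_lower_bound} applies'' is missing, and it is the one point where the closeness parameter actually costs you query complexity rather than only tolerance.
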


Note that for any class $\mathcal{C}$, underlying distribution and accuracy $1-\epsilon$,
one can SQ-learn the class with $q=|\mathcal{C}|$ statistical queries and tolerance $\tau=O(\epsilon)$, by going over all the hypotheses. Thus a class is non-trivially SQ-learnable if one can learn it with $q=|C|^{o(1)}$ queries and tolerance $\tau\geq \mathrm{poly}(\epsilon)$. 
Focusing on the case that $|\mathcal{C}|,|\mathcal{X}|=\text{poly}(N)$ and $\mu,d,q,1/\epsilon,1/\tau=N^{o(1)}$, we get that bounded memory learning is equivalent to SQ learning.

\subsection{Related work} 
    \paragraph{Characterization of bounded memory learning.}
    Many works have proved lower bounds under memory constraints \cite{shamir14,raz16,kol17,moshkovitz17,moshkovitz18,raz17,garg18,dagan18,beame18,sharan19,garg19,dagan19}. Some of these works even provide a necessary condition for learnability with bounded memory. As for upper bounds, not many works have tried to give a general property that implies learnability under memory constraints. One work suggested such property   \cite{moshkovitz17general} but this did not lead to a full characterization of bounded memory learning. 
    
    \paragraph{Statistical query learning.} After Kearns's introduction of statistical query \cite{kearns98}, Blum et al. \cite{blum94} characterized weak learnability using SQ dimension. Specifically, if $SQ_P(\mathcal{C})=d$, then $\mathrm{poly}(d)$ queries are both needed and sufficient to learn with accuracy $1/2+\mathrm{poly}(1/d)$. Note that the advantage is very small, only $\mathrm{poly}(1/d)$. Subsequently several works  \cite{balcazar07,simon07,szorenyi09, feldman12} suggested a few characterizations of strong SQ learnability.

     \paragraph{Bounded memory and SQ dimension.} In this paper we prove an equivalence, in some parameters regime, between bounded memory learning and SQ learning. There were a few indications in the literature that such an equivalence exists. The work \cite{steinhardt16} showed a general reduction from any SQ learner to a memory efficient learner. Alas, they gave an example that suggests that an equivalence is incorrect, which we now address.
     
      \begin{example}[Learning sparse parity]
      Consider the concept class of parity on the first $k$ bits of an $n$-bit input for $k \ll n$, for example $k=\sqrt{n}$. That is, $\mathcal{X}=\{0,1\}^n$ and $\mathcal{C}=\{0,1\}^k \cdot \{0\}^{n-k}$ is a subset of all possible parities. 
       Naively, an ERM algorithm would need to store $\Theta(k)$ examples, each requiring $n$ bits, and hence need $b=\Theta(k n)$. However, it suffices to store only the first $k$ bits of each example, and hence only use $b=\Theta(k^2)$ bits. 
       As this is significantly less than the naive bound of $\Theta(kn)$ we consider the class to be bounded memory learnable. On the other hand, the SQ dimension of $\mathcal{C}$ is maximal, namely $2^k$, and hence \cite{steinhardt16} suggest that this example separates bounded memory learning and SQ learning. 
       
       Relating to our results, it shows two things: when the sizes of the concept classes $\mathcal{C}$ and example set $\mathcal{X}$ are polynomially related, there is no such separation (we prove this for small enough $\epsilon$ and conjecture for all $\epsilon$). Moreover, the $2^{O(\sqrt{b})}$ term in \Cref{thm:bm2sqPAC} is tight. 
      \end{example}

     The work \cite{garg18} showed that high SQ dimension implies non-learnability with bounded memory when the learner returns the exact answer. However, learnability is usually inexact and this does not relate to strong learnability.   
    
    \paragraph{Littlestone dimension.} Online learnability without memory constraints is characterized using Littlestone dimension \cite{littlestone88}. This dimension is not suited for bounded memory learning as it does not take into account the structure of the class which determines whether the class is learnable with bounded memory or not. Specifically, there are problems that have similar Littlestone dimension
    (e.g., parity and discrete thresholds on the line), where the former (thresholds) is easy to learn under memory constraints and the latter (parity) is hard.
    
    \paragraph{Learning under a known distribution.} In SQ framework most works focused on learning under known distributions \cite{blum94,bshouty02,yang01,yang05,balcazar07,simon07,feldman12,szorenyi09}. However, PAC learning research under known distribution is scarce but exists, e.g., \cite{benedek91,ben08,vayatis99,sabato13}. In particular, Benedek {et al.} \cite{benedek91}  showed that unconstrained learning under known distribution is characterized by covering.
    
    \paragraph{Smooth distributions.} A key idea in this paper is to use \emph{close} distributions which are upper and lower bounded by a distribution. A one sided closeness, namely the upper bound, is referred in the literature as a \textit{smooth} distribution, see for example \cite{bshouty02}. 
    Smooth distributions were also used to show equivalence between boosting and hard-core sets \cite{klivans99, impagliazzo95}.

\subsection{Paper organization}
 We begin in \cref{sec:prem} with a presentation of known results in boosting and statistical queries that we will need. In \cref{sec:bounded_sq_implies_bounded_memory} we construct learning algorithms based on the assumption that close distributions have bounded SQ dimensions, and prove  \cref{thm:sq2bmPAC} and \cref{thm:sq2bmSQ}.
 In \cref{sec:bounded_memory_implies_bounded_sq} we establish the reverse direction and prove \cref{thm:bm2sqPAC} and \cref{thm:bm2sqSQ}. Omitted proofs can be found in the appendix.

\section{Preliminaries}\label{sec:prem}

We review some well known definitions and results necessary for our work. 

\paragraph{Weak learning and boosting.} It is often conceptually easier to design an algorithm whose accuracy is slightly better than an educated guess, and then attempt to boost its accuracy. 

Consider first the PAC model. We say that a learning algorithm $\mathcal{W}$ is a $\gamma$-\textit{weak learner} if there exists an integer $m$ such that for any target concept $c \in \mathcal{C}$ and any $n \ge m$, with probability at least $2/3$ over the draw of an i.i.d. labeled sample $S = ((x_1,c(x_1)),\ldots,(x_n,c(x_n)))$ according to the underlying distribution $P$, the hypothesis returned by $\mathcal{A}$ is $(1/2-\gamma)$-accurate. We refer to the minimal integer $m$ satisfying the above as the \emph{sample complexity} of the weak learner. The notion of $\gamma$-weak learning in the SQ framework is defined analogously, where the \textit{query complexity} of the weak learner is denoted by $q_{\tau}$ (where $\tau$ is the tolerance parameter).

A \textit{boosting} algorithm $\mathcal{A}$ uses an oracle access to a weak learner $\mathcal{W}$ and aggregates the predictions of $\mathcal{W}$ into a satisfactory accurate solution. The celebrated works of Freund and Schapire \cite{schapire90,schapire91,freund92,freund95,freund97} provide several successful boosting algorithms for the PAC model. The work of \cite{aslam93} extended some of these results to the SQ framework.

\paragraph{Known SQ-dimension bounds for weak learning.}
The following upper and lower bounds are known. The first upper bound is a folklore lemma whose proof can be found in \cite{szorenyi09}.
\begin{prop} \label{prop:weakSQ}
Let $\mathcal{C}$ be a concept class, $P$ an underlying distribution, such that $\mathrm{SQ}_P(\mathcal{C})\leq d$. Then there is a $(1/d)$-weak SQ-learner with query complexity $q=d$ and tolerance $\tau=1/3d$.
\end{prop}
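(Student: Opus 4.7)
The plan is to exhibit a small fixed ``test set'' of at most $d$ hypotheses in $\mathcal{C}$ such that every target concept is noticeably correlated with at least one of them, and then to use one statistical query per test hypothesis to locate it.

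I would first greedily pick a maximal subset $h_{1},\ldots,h_{D}\in\mathcal{C}$ subject to $|\langle h_{i},h_{j}\rangle_{P}|\le 1/(d+1)$ for all $i\neq j$. The SQ-dimension hypothesis forces $D\le d$: a set of size $d+1$ with pairwise correlations bounded by $1/(d+1)$ would by definition witness $\mathrm{SQ}_{P}(\mathcal{C})\ge d+1$, contradicting the assumption. Maximality then gives the key property: for any target $c\in\mathcal{C}$ the augmented set $\{h_{1},\ldots,h_{D},c\}$ must violate the pairwise bound, and since the $h_{i}$'s already satisfy it the violation must involve $c$, so there exists an index $j$ with $|\langle c,h_{j}\rangle_{P}|>1/(d+1)$ (the degenerate case $c=h_{i}$ trivially gives correlation $1$).

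Next, the weak SQ-learner issues the $D\le d$ queries $(h_{i},\tau)$ with $\tau=1/(3d)$, obtaining approximations $\nu_{i}$ to $\langle h_{i},c\rangle_{P}$, and outputs $\mathrm{sgn}(\nu_{i^{\star}})\,h_{i^{\star}}$, where $i^{\star}=\arg\max_{i}|\nu_{i}|$. For the witness index $j$ above, $|\nu_{j}|\ge|\langle c,h_{j}\rangle_{P}|-\tau>1/(d+1)-\tau$, which exceeds $\tau$ for the prescribed parameters (once $d$ is not tiny, $1/(d+1)>2\tau$). Consequently $|\nu_{i^{\star}}|>\tau$, which forces $\mathrm{sgn}(\nu_{i^{\star}})=\mathrm{sgn}(\langle h_{i^{\star}},c\rangle_{P})$, and the output has positive correlation with $c$ of order $1/d$. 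Translating correlation into accuracy via $\Pr_{P}[h(x)=c(x)]=\tfrac{1}{2}+\tfrac{1}{2}\langle h,c\rangle_{P}$ yields the claimed weak advantage.

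The only real obstacle is matching the constants precisely: the approach as written naturally yields an $\Omega(1/d)$-weak learner, and calibrating it to be exactly $(1/d)$-weak amounts to a careful bookkeeping around the choice of pairwise threshold and tolerance (and very small $d$ must be handled separately, where the claim is effectively vacuous). Beyond this numerical calibration there is no deeper conceptual difficulty; the entire argument reduces to the greedy maximal set construction plus the triangle inequality for SQ approximations. Finally, if the $2/3$ confidence in the definition of weak learning is desired, it is automatic once the sign extraction is correct and the SQ oracle is deterministic on its stated guarantee.
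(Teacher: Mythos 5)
Your argument is precisely the standard folklore proof that the paper itself does not reproduce but defers to the cited reference: take a maximal subset of $\mathcal{C}$ with pairwise correlations at most $1/(d+1)$ (of size at most $d$ by the SQ-dimension bound), note that maximality forces every target to correlate noticeably with some member, query each member once, and output the member with the largest estimated correlation together with the extracted sign. The constant-factor caveat you flag is real but lives in the proposition's statement rather than in your proof: maximality only guarantees a witness correlation exceeding $1/(d+1)$, so even with exact queries the advantage is about $1/(2(d+1))$ rather than the literal $1/d$; since the paper only feeds this proposition into boosting, where $\gamma$ enters as $O(\gamma^{-2})$, your $\Omega(1/d)$ guarantee is exactly what is needed and no further calibration is required.
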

The next lower bound was initially proved by
\cite{Blum2015}. A simplified proof was given later by \cite{szorenyi09}.
\begin{prop}\label{prop:SQ_lower_bound}
Let $\mathcal{C}$ be a concept class, $P$ an underlying distribution, and let $d=SQ_P(\mathcal{C})$. Any learning algorithm that uses tolerance parameter lower bounded by $\tau>0$ requires in the worst case at least $(d\tau^2-1)/2$ queries for learning $\mathcal{C}$ with accuracy at least $1/2+1/d$.
\end{prop}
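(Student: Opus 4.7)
The plan is an adversary argument built from the $d$ near-orthogonal witnesses $h_1,\ldots,h_d\in\mathcal{C}$ for $\mathrm{SQ}_P(\mathcal{C})=d$. Write $\|f\|_P^{2}:=\langle f,f\rangle_P$; then $\|h_i\|_P=1$ and $|\langle h_i,h_j\rangle_P|\le 1/d$ for $i\neq j$. The analytic core is a Parseval-type estimate: a Gershgorin bound on the $d\times d$ Gram matrix of the $h_i$ shows that its operator norm is at most $2-1/d$, so for every $\chi:\mathcal{X}\to[-1,1]$,
\[
\sum_{i=1}^{d}\langle \chi,h_i\rangle_P^{2}\le 2\,\|\chi\|_P^{2}\le 2.
\]
In particular, for any centering $\nu$, at most $2/\tau^{2}$ of the indices $i$ can satisfy $|\langle \chi,h_i\rangle_P-\nu|>\tau$ once $\nu$ sits near the mean of $\{\langle \chi,h_i\rangle_P\}_i$.

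Next I would set up the adversary. Run the learner against an oracle that maintains a compatibility set $S\subseteq[d]$ starting from $S=[d]$. On a query $(\chi,\tau)$ it answers $\nu:=|S|^{-1}\sum_{i\in S}\langle \chi,h_i\rangle_P$ and then removes from $S$ every index $i$ with $|\langle \chi,h_i\rangle_P-\nu|>\tau$. Chebyshev on the empirical distribution of $\{\langle \chi,h_i\rangle_P:i\in S\}$, combined with the Parseval bound (which yields variance at most $2/|S|$), shows at most $2/\tau^{2}$ indices are dropped per round. Hence after $q$ rounds
\[
|S_q|\ge d-\tfrac{2q}{\tau^{2}}.
\]
Crucially, the oracle's answer depends only on $(\chi,S)$, so the transcript, and therefore the learner's output hypothesis $h$, is identical for every target $c\in S_q$.

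Finally I would close the argument. If the learner achieves accuracy $\ge 1/2+1/d$ against every compatible target, then the same $h$ must satisfy $\langle h,h_i\rangle_P\ge 2/d$ for all $i\in S_q$. Coupling this with the Parseval estimate $\sum_{i\in S_q}\langle h,h_i\rangle_P^{2}\le 2$ through Cauchy--Schwarz yields an upper bound on $|S_q|$. Matching it against the survival lower bound $|S_q|\ge d-2q/\tau^{2}$ and rearranging produces the claimed threshold $q\ge(d\tau^{2}-1)/2$.

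The main obstacle is the last step: calibrating the Cauchy--Schwarz collapse against the exact accuracy threshold $1/2+1/d$ so that precisely the constant $(d\tau^{2}-1)/2$ emerges. A careless ``average correlation'' bound gives a weaker threshold; the sharp constant requires plugging the survivor count back into the Parseval-plus-Cauchy--Schwarz combination carefully, and handling the small probabilistic slack from the success probability $2/3$ so that ``at least one compatible target causes failure'' turns into a clean inequality in $q$.
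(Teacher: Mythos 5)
The paper itself does not prove this proposition; it is imported from the literature (attributed to Blum et al., with a simplified proof by Sz\"{o}r\'{e}nyi), and your outline follows exactly that standard route: take the $d$ near-orthogonal witnesses, prove the approximate-Parseval bound $\sum_{i}\langle \chi,h_i\rangle_P^2\le 2\|\chi\|_P^2$ via Gershgorin on the Gram matrix, run an adversarial oracle whose answer is consistent with all but at most $2/\tau^2$ of the surviving candidates per query, and conclude $|S_q|\ge d-2q/\tau^2$ with an identical transcript for all survivors. All of those steps are correct as you state them.

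The gap is the final step, and it is not a calibration issue that more care would fix --- it genuinely does not close at the advantage threshold $1/d$. Accuracy $1/2+1/d$ corresponds to correlation $\langle h,h_i\rangle_P\ge 2/d$, so your Parseval-plus-Cauchy--Schwarz count only yields $|S_q|\le 2/(2/d)^2=d^2/2$, which is weaker than the trivial bound $|S_q|\le d$; no contradiction with $|S_q|\ge 1/\tau^2$ follows unless $\tau\lesssim 1/d$, in which regime the claimed threshold $(d\tau^2-1)/2$ is vacuous. Indeed a single bounded function can be $2/d$-correlated with an entire near-orthogonal family: for the class of $n$ dictators under the uniform distribution ($\mathrm{SQ}$ dimension $d=n$), the fixed hypothesis $\mathrm{Maj}_n$ has correlation $\Theta(n^{-1/2})\gg 2/n$ with every target, so a zero-query learner already attains accuracy $1/2+1/d$, contradicting the literal statement. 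The classic theorems avoid this by tying the advantage to the tolerance: with correlation threshold $2\gamma$ the same count gives $|S_q|\le 1/(2\gamma^2)$, and matching this against $|S_q|>1/\tau^2$ requires $\gamma\ge\tau/\sqrt{2}$ (Blum et al.\ instantiate $\gamma=\tau=d^{-1/3}$). So your argument is salvageable --- and is the intended one --- only for accuracy $1/2+\Omega(\tau)$, which is in fact all the paper uses downstream (there the accuracy is the constant $0.9$); as stated, with advantage $1/d$, the last step cannot be completed.
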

Finally, the next proposition shows that SQ learnability (weak or strong) implies learning with bounded memory. 
\begin{prop}[Theorem 7 in \cite{steinhardt16}] \label{prop:steinhardt} Assume that a class $\mathcal{C}$
can be learned using $q$ statistical queries with tolerance $\tau$.
Then there is an algorithm that learns $\mathcal{C}$ using
$m=O(\frac{q\log|\mathcal{C}|}{\tau^{2}}(\log(q)+\log\log(|\mathcal{C}|))$
samples and $b=O(\log|\mathcal{C}|\cdot\log(q/\tau))$ bits.
\end{prop}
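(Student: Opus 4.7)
The plan is to simulate the given SQ algorithm $\mathcal{A}$ online over the sample stream, answering each of its statistical queries by an empirical average over a fresh batch of samples. When $\mathcal{A}$ issues its $i$-th query $h_i$, I would read the next $n=\Theta((\log q+\log\log|\mathcal{C}|)/\tau^{2})$ samples and accumulate $\hat\nu_i=\frac{1}{n}\sum_{j} h_i(x_j)\,c(x_j)$ into a single $O(\log(q/\tau))$-bit counter. Hoeffding's inequality guarantees $|\hat\nu_i-\langle h_i,c\rangle_P|\le\tau/2$ with failure probability at most $1/(100\,q\log|\mathcal{C}|)$; the finished estimate $\hat\nu_i$ is fed back to $\mathcal{A}$ once the batch is consumed, after which the counter is discarded.

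Since $\mathcal{A}$ guarantees accuracy only with probability $2/3$ over its own randomness, I would next run $R=\Theta(\log|\mathcal{C}|)$ independent copies of the above simulation sequentially on disjoint sample batches, yielding candidate hypotheses $\hat h_1,\ldots,\hat h_R\in\mathcal{C}$. A union bound over the $qR$ Hoeffding events keeps every simulated query $\tau/2$-accurate with constant probability, so each copy behaves like a genuine execution of $\mathcal{A}$ and returns an $\epsilon$-accurate $\hat h_r$ with probability $2/3$; a Chernoff bound then produces at least one accurate candidate with probability $\ge 1-|\mathcal{C}|^{-\Omega(1)}$. A final tournament among the $R$ candidates, using an additional $\tilde O(R/\epsilon^{2})$ samples and keeping just two hypotheses in memory at a time, selects one that is $\epsilon$-accurate. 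Summing the contribution of all batches gives total sample complexity $O(Rqn)=O\bigl(q\log|\mathcal{C}|(\log q+\log\log|\mathcal{C}|)/\tau^{2}\bigr)$, matching the claim.

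The memory at any moment holds: (i) the internal state of the current simulated copy of $\mathcal{A}$; (ii) one $O(\log(q/\tau))$-bit counter; (iii) $O(\log R+\log q)$ bits of loop indices; and (iv) at most two candidate hypotheses from $\mathcal{C}$ during the tournament. The main obstacle is bounding item (i) by $O(\log|\mathcal{C}|\log(q/\tau))$ bits, since a priori an arbitrary SQ algorithm may carry an unbounded state. The approach is to first put $\mathcal{A}$ into a canonical decision-tree form whose internal nodes are labeled by queries, whose edges are labeled by answers rounded to multiples of $\tau$, and whose leaves are labeled by concepts in $\mathcal{C}$. After $i$ queries the algorithm's state is then summarized by a pointer into $\mathcal{C}$ ($O(\log|\mathcal{C}|)$ bits) encoding the currently favoured hypothesis, together with $O(\log(q/\tau))$ bits of refinement data tracked along the decision path; designing this canonical form so that these two pieces suffice, rather than the naive $O(q\log(1/\tau))$-bit full history, is the delicate step and is where the state-compression argument underlying Steinhardt et al.'s original proof must be invoked.
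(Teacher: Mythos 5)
First, note that the paper does not prove this proposition at all: it is imported verbatim as Theorem~7 of \cite{steinhardt16}, so there is no in-paper argument to compare against. Judged on its own terms, your proposal has a genuine gap, and it sits exactly at the crux of the theorem. The routine parts are fine: answering each statistical query by an empirical average over a fresh batch of $\Theta((\log q+\log\log|\mathcal{C}|)/\tau^{2})$ samples, keeping only an $O(\log(q/\tau))$-bit counter per batch, and union-bounding the Hoeffding failures. The entire difficulty of the theorem is your item (i): why the simulated SQ learner's \emph{inter-query} state fits in $O(\log|\mathcal{C}|\cdot\log(q/\tau))$ bits. For an adaptive SQ algorithm the next query depends on the whole history of answers, so the naive persistent state is the sequence of $q$ discretized answers, i.e.\ $\Theta(q\log(1/\tau))$ bits, which exceeds the claimed budget whenever $q\gg\log|\mathcal{C}|$ (precisely the regime in which this paper applies the proposition, where $q=\mathrm{poly}(d/\epsilon)$). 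Your proposed remedy --- a canonical decision tree whose state is ``a pointer into $\mathcal{C}$ plus $O(\log(q/\tau))$ bits of refinement data'' --- is asserted, not constructed: there is no reason a generic SQ learner's position in its query tree is determined by a single favoured concept plus $O(\log(q/\tau))$ extra bits, and you yourself flag that this is ``where the state-compression argument underlying Steinhardt et al.'s original proof must be invoked.'' Deferring to the very result being proved means the memory bound, which is the whole content of the proposition, is not established.

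Two secondary points. The amplification to success probability $1-|\mathcal{C}|^{-\Omega(1)}$ via $R=\Theta(\log|\mathcal{C}|)$ repetitions is not required by the statement (the paper's convention is constant success probability $2/3$), and the final tournament costs $\tilde O(R/\epsilon^{2})$ additional samples, introducing an $\epsilon$-dependence absent from the claimed bound $O(\frac{q\log|\mathcal{C}|}{\tau^{2}}(\log q+\log\log|\mathcal{C}|))$; when $\epsilon\ll\tau$ this can dominate. This also suggests that the $\log|\mathcal{C}|$ factor in the true sample bound arises from the state-compression machinery itself rather than from confidence amplification, so your accounting matches the target numerically somewhat by reverse engineering rather than for the right structural reason.
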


\paragraph{Additional notation}
We denote the density and the cumulative binomial distribution by $\mathrm{Binom}(m,r,p)$ and $\mathrm{Binom}(m,\le r,p)$, which respectively
refer to the probability of observing exactly (at most) $r$ heads
in $m$ independent and identical trials where the probability of
``head'' in each single trial is $p$.\footnote{If $r > m$ or $r<0$ then both terms are equal to zero.}


\section{From bounded SQ dimension to bounded memory learning}\label{sec:bounded_sq_implies_bounded_memory}
In this section we prove our upper bounds:  \cref{thm:sq2bmPAC} and \cref{thm:sq2bmSQ}. A schematic illustration of the proof is given in  \cref{fig:sq2bm}.
\begin{figure}
\begin{center}
\includegraphics[scale=0.5]{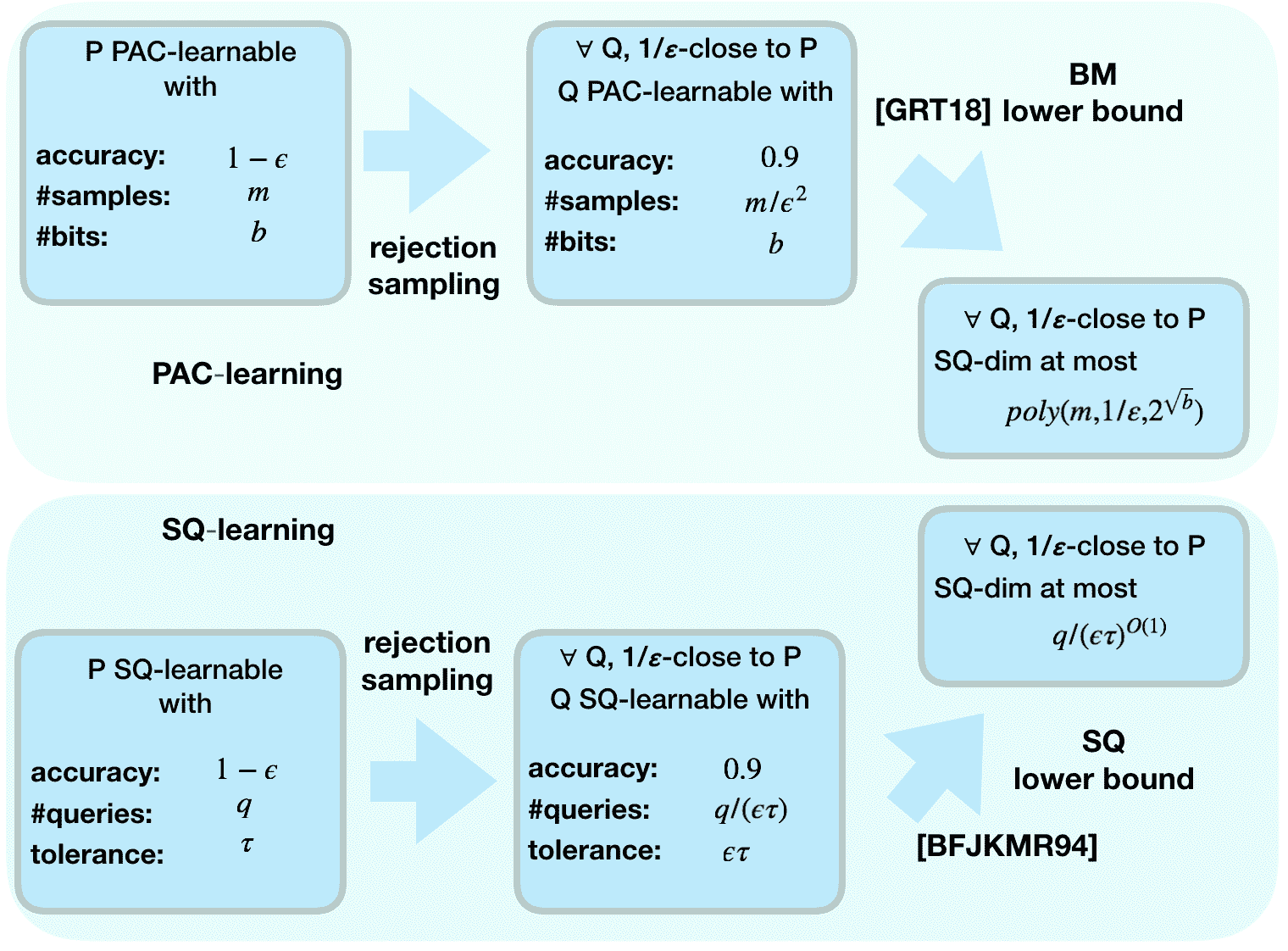} 
\caption{Proof outline (with asymptotic terms): from bounded SQ dimension under close distributions to strong learnability.}
\label{fig:sq2bm}
\end{center}
\end{figure}

\paragraph{Overview.}
To prove \cref{thm:sq2bmPAC} we apply an extension of the Boosting-By-Majority (BBM) algorithm \cite{freund95} to the SQ framework due to \cite{aslam93}. Similarly to other popular boosting methods (e.g. AdaBoost \cite{freund97}), the algorithm operates by re-weighting the input sample and feeding the weak learner with sub-samples drawn according to the re-weighed distributions. The main challenge is to bound the SQ-dimension of the probability distributions maintained by the boosting algorithm. This will allow us to obtain a bound on the query complexity of the boosting process using \cref{prop:weakSQ} and thus conclude \cref{thm:sq2bmPAC}. Consequently, we deduce \cref{thm:sq2bmSQ} using  \cref{prop:steinhardt}. 

\paragraph{Reviewing Boost-By-Majority (BBM).}
Let $\mathcal{W}$ be a $\gamma$-weak learner with respect to the distribution $P$ with sample complexity $m_0$.
Similarly to most boosting algorithms, BBM operates by iteratively re-weighting and feeding a given $\gamma$-weak learner with $T$ i.i.d. samples $S_1,\ldots,S_T$ of size $m_0$.  
The outputs $h_1,\ldots,h_T$ of the weak learner are then aggregated into a majority vote classifier:
\[
h(x) = \texttt{Majority}(h_1(x),\ldots,h_T(x)) := \begin{cases} 1 & \sum_t h_t(x) > 0 \\ -1 & \textrm{otherwise} \end{cases}\;.
\]
To make the algorithm memory-efficient \cite{freund95} suggests to implement the re-weighting using \textit{rejection sampling}. Let $h_1,\ldots,h_t$ be the weak classifiers collected during the first $t$ rounds. At the beginning of round $t+1$, the algorithm draws an example $x \sim P$ and keeps it with probability
\begin{equation} \label{eq:bbmDist1}
w_{t+1}(x)=\mathrm{Binom} \left(T-t,\left\lfloor \frac{T-t-r(x)}{2}\right\rfloor ,1/2+\gamma \right)~\qquad \textrm{where}~~r(x):=\sum_{i=1}^t h_i(x).
\end{equation}
Therefore, the induced probability distribution on time $t$ is
\begin{equation} \label{eq:bbmDist2}
    P_{t+1}(x) = w_{t+1}(x) P(x)/Z
\end{equation}
where $Z$ is a normalization factor.
It repeats this step until either collecting
$m_0$ samples or rejecting $\Theta(\epsilon^{-3}\log T)$ consecutive examples. In the former scenario it feeds the weak learner with the resulted
sample, whereas in the latter scenario it aborts the boosting process and returns the hypothesis $h=\texttt{Majority}(h_1(x),\ldots,h_t(x))$.\footnote{In \cite{freund95}, the algorithm does not actually abort but proceeds by drawing random hypotheses for $T-t$ rounds. It was shown in \cite{jackson95}, Lemma 5.2, that (with the above rejection criteria) the algorithm can actually abort and return a majority vote.}
\begin{prop}\cite{freund95} \label{prop:bbm}
Let $\epsilon>0$. With probability at least $2/3$, the following hold:
\begin{enumerate}
\item BBM reaches an $\epsilon$-accurate hypothesis after at most  $T=O(\gamma^{-2}\log(1/\epsilon))$ rounds. 
\item There exists a global constant $C>0$ such that for every round $t$, the probability distribution $P_t$ satisfies $P_t(x) \le (C/ \epsilon^3) \cdot P(x)$ for all $x$.
\end{enumerate}
\end{prop}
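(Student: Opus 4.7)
The plan is to treat the two parts essentially independently: part (1) is the classical accuracy guarantee of Boost-By-Majority, while part (2) is the smoothness property that we will exploit to control the SQ dimension along the boosting trajectory. Before turning to either part, I would condition on the (high-probability) event that every invocation of the weak learner succeeds, so that $\Pr_{x \sim P_t}[h_t(x) = c(x)] \ge 1/2 + \gamma$ for all rounds $t$; standard confidence amplification of the weak learner, followed by a union bound over the $T$ rounds, absorbs this into the overall failure probability.

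For part (1) I would mimic Freund's potential argument. Define the cumulative margin $r_t(x) = \sum_{i \le t} h_i(x) c(x)$, so that the majority vote errs on $x$ precisely when $r_T(x) \le 0$. By construction, the weight $w_{t+1}(x)$ in \eqref{eq:bbmDist1} is exactly the probability that a biased random walk with bias $\gamma$, continuing for $T-t$ more steps starting from $r_t(x)$, ends non-positive. Set the potential $\Phi_t := \mathbb{E}_{x \sim P}[w_{t+1}(x)]$; its initial value is $\Phi_{-1} = \mathrm{Binom}(T, \le T/2, 1/2+\gamma) \le e^{-2T\gamma^2}$ by a Chernoff bound, and its terminal value $\Phi_T$ upper bounds the true error of the majority. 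The crucial step is monotonicity $\Phi_{t+1} \le \Phi_t$, which reduces to a pointwise identity writing $w_{t+1}(x)$ as a $(1/2+\gamma,\, 1/2-\gamma)$ convex combination of the two possible values of $w_{t+2}(x)$, combined with the $\gamma$-advantage of $h_{t+1}$ on $P_{t+1}$. Choosing $T = \Theta(\gamma^{-2} \log(1/\epsilon))$ then drives the error below $\epsilon$.

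For part (2), observe that $w_{t+1}(x)$ is itself a probability and hence bounded by $1$, so $P_{t+1}(x) \le P(x)/Z$ with $Z = \mathbb{E}_{x \sim P}[w_{t+1}(x)]$. It therefore suffices to lower bound $Z$, and this is exactly where the rejection-sampling abort criterion enters: each fresh example $x \sim P$ is accepted with probability exactly $Z$, so if $Z$ were below $c \epsilon^3 / \log T$ for a sufficiently small constant $c$, a standard concentration bound on $\Theta(\epsilon^{-3}\log T)$ independent Bernoulli$(Z)$ trials would force an abort with probability $1 - 1/\mathrm{poly}(T)$. Conditioning on the event that the algorithm does not abort at round $t+1$ therefore guarantees $Z \gtrsim \epsilon^3/\log T$; absorbing the mild $\log T = O(\log\log(1/\epsilon) + \log(1/\gamma))$ factor into the constant $C$, we obtain $P_{t+1}(x) \le (C/\epsilon^3) P(x)$ as required.

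The hard part will be taming the dependencies between rounds: the rejection sampler, the weak learner's guarantee, and the lower bound on $Z$ all share randomness with the history of the boosting process. My plan is to define, for each round $t$, a ``good event'' consisting of (i) success of the weak learner, (ii) $Z_t$ being at least of order $\epsilon^3/\log T$, and (iii) the rejection subsampler producing the requested $m_0$ examples without aborting. Each of these has conditional probability $1 - O(1/T)$ given the previous good events, so a single union bound over the $T$ rounds closes the induction, after which both conclusions of the proposition hold deterministically on the overall good event.
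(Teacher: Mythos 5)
First, a framing point: the paper offers no proof of \cref{prop:bbm} at all --- it is imported wholesale from Freund's boost-by-majority paper (with the abort variant credited to Jackson's Lemma 5.2 in the footnote), so there is no in-paper argument to compare yours against. Judged on its own terms, your proposal has the right architecture (a potential argument for accuracy, a lower bound on the normalization factor $Z=\mathbb{E}_{x\sim P}[w_{t+1}(x)]$ for smoothness), but it contains two genuine gaps.

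The first gap is that you conflate the weight with the potential. The quantity $w_{t+1}(x)$ in \cref{eq:bbmDist1} is a binomial \emph{density}: the probability that the future biased walk lands exactly on the tie, not that it ends non-positive. Freund's potential is the binomial \emph{tail} $\beta_t(r)=\mathrm{Binom}(T-t,\le\lfloor(T-t-r)/2\rfloor,1/2+\gamma)$, whose terminal value is the error indicator and whose initial value is controlled by Hoeffding; the weight $w_{t+1}$ is the discrete derivative $\beta_{t+1}(r-1)-\beta_{t+1}(r+1)$ (up to normalization), and the per-round potential drop equals $Z$ times the edge of $h_{t+1}$ under $P_{t+1}$. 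Your $\Phi_t:=\mathbb{E}_P[w_{t+1}(x)]$ is the normalization factor $Z_{t+1}$, which is neither monotone nor an upper bound on the final error, so the monotonicity step fails as written. The argument needs both objects, $\beta$ and $w$, and the relation between them.

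The second gap is in how you handle the abort. Conditioning on ``the algorithm does not abort at round $t+1$'' does not guarantee $Z\gtrsim\epsilon^3/\log T$: $Z$ is a deterministic function of the history $h_1,\ldots,h_t$, and when $Z$ is tiny the round still executes with some nonzero probability, so item (ii) of your per-round ``good event'' cannot have conditional probability $1-O(1/T)$. The correct logic is a case split on $Z$ itself: if $Z\ge\theta=\Omega(\epsilon^3)$ then with high probability no abort occurs and $P_{t+1}(x)\le P(x)/\theta$; if $Z<\theta$ then with high probability the algorithm aborts, and --- this is the piece missing entirely from your part (1) --- one must invoke Jackson's Lemma 5.2, which says that a small expected weight already certifies that the current majority vote is $\epsilon$-accurate, so the aborted run satisfies conclusion (1) and the non-smooth distribution $P_{t+1}$ is never fed to the weak learner. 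Without that lemma your accuracy proof covers only runs that complete all $T$ rounds, and your smoothness proof has no valid lower bound on $Z$. (A smaller issue: even patched, your scheme loses a $\log T=O(\log(1/\gamma)+\log\log(1/\epsilon))$ factor, which is not a universal constant $C$; Freund's threshold-based case split avoids this.)
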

\paragraph{SQ-Boost-By-Majority.}
Following \cite{aslam93} we describe how BBM can be carried out in the SQ model.  
Instead of having an access to a sampling oracle, the booster $\mathcal{A}$ is has an access to an SQ oracle with respect to the distribution $P$ and the target concept $c$. Similarly to BBM, the booster re-weights the points in $\mathcal{X}$ in iterative fashion, thereby defining a sequence of distributions,   $P_1,\ldots,P_T$. The weak learner $\mathcal{W}$ itself also works in the SQ model. That is, instead of requiring samples $S_1,\ldots,S_T$ drawn according to $P_1,\ldots,P_T$, it submits statistical queries to the boosting algorithm. The guarantee of the weak learner remains intact; provided that it gets sufficiently accurate answers (as determined by the tolerance parameter $\tau$), $\mathcal{W}$ should output a weak classifier whose correlation with the target concept is at least $\gamma$. 

Therefore, the challenging part in translating BBM to the SQ model is to enable simulating answers to statistical queries with respect to the distributions $P_1,\ldots,P_T$ given only an access to an SQ oracle with respect to the initial distribution $P$. Fortunately, the BBM's re-weighting scheme makes it rather easy. It follows from the definition of the distributions maintained by BBM (see \cref{eq:bbmDist1} and \cref{eq:bbmDist2})  
that in the beginning of round $t$, the space $\mathcal{X}$ partitions into $t$ regions such that the probability of points in each region is proportional to their initial distribution according to $P$. This allows simulating an \textit{exact} SQ query with respect to $P_{t+1}$ using $O(t)$ exact SQ queries to $P$. Furthermore, as shown in \cite{aslam93}, the fact that $P_t(x)\le (C/\epsilon^{3}) \cdot P(x)$ allows us to perform this simulation with suitable tolerance parameters. This is summarized in the next proposition. 
\begin{prop}[\cite{aslam93}] \label{prop:aslam}
Any statistical query with respect to the distribution $P_t$ with tolerance $\tau$ can be simulated using $O(t)$ statistical queries with respect to the original distribution $P$ with tolerance parameter $\Omega(\tau \cdot \mathrm{poly}(\epsilon))$.
\end{prop}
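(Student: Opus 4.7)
The plan is to express $\langle h, c\rangle_{P_t}$ as a short linear combination of $\{-1,1\}$-valued correlation queries on $P$, exploiting two features of the BBM reweighting: (i) $w_t(x)$ depends on $x$ only through $r(x) = \sum_{i < t} h_i(x)$, which takes only $O(t)$ distinct integer values, so $w_t$ is piecewise constant on $O(t)$ regions of $\mathcal{X}$; and (ii) the density bound $P_t(x)/P(x) \le C/\epsilon^3$ from \cref{prop:bbm}. First I would reduce to a single bounded real-valued query: set $W := \max_x w_t(x)$, $Z := \mathbb{E}_P[w_t]$, and $\phi(x) := (w_t(x)/W)\,h(x) \in [-1,1]$. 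Both $W$ and $Z$ are computable by the learner without any SQ call, since $P$ and $h_1,\ldots,h_{t-1}$ are known. Then
\[
\langle h, c\rangle_{P_t} \;=\; \frac{1}{Z}\,\mathbb{E}_P\bigl[w_t(x)\,h(x)\,c(x)\bigr] \;=\; \frac{W}{Z}\,\mathbb{E}_P[\phi(x)c(x)],
\]
and by \cref{prop:bbm} we have $W/Z \le C/\epsilon^3$, so it suffices to estimate $\mathbb{E}_P[\phi c]$ to tolerance $\tau\epsilon^3/C$.

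\paragraph{Layer-cake decomposition.} Since $\phi$ takes at most $2t$ distinct values in $[-1,1]$, enumerate them as $-1 \le v_1 < v_2 < \cdots < v_k \le 1$ with $k = O(t)$. The telescoping identity
\[
\phi(x) \;=\; v_1 \;+\; \sum_{i=2}^{k} (v_i - v_{i-1})\,\mathbf{1}[\phi(x) \ge v_i]
\]
holds pointwise. Each indicator can be shifted to a $\{-1,1\}$-valued function $g_i(x) := 2\,\mathbf{1}[\phi(x) \ge v_i] - 1$ that the learner can evaluate explicitly (it depends only on $x$, the known $h_1,\ldots,h_{t-1}$, and the queried $h$), so $g_i$ is a valid SQ query function on $P$. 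I would submit the $k-1$ queries $g_2,\ldots,g_k$ together with the constant query (to estimate $\mathbb{E}_P[c]$), each with tolerance $\tau'$, and combine the answers into the natural linear estimator of $\mathbb{E}_P[\phi c]$. The aggregated error is bounded by
\[
|v_1|\,\tau' \;+\; \sum_{i=2}^{k} (v_i - v_{i-1})\,\tau' \;\le\; \bigl(1 + (v_k - v_1)\bigr)\,\tau' \;\le\; 3\tau'.
\]
Choosing $\tau' = \tau\epsilon^3/(3C) = \Omega(\tau\cdot\mathrm{poly}(\epsilon))$ and scaling by $W/Z \le C/\epsilon^3$ produces an estimate of $\langle h, c\rangle_{P_t}$ with error at most $\tau$, using $k = O(t)$ queries to the $P$-oracle.

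\paragraph{Main obstacle.} The subtlety — and the reason a more direct approach fails — is avoiding a spurious factor of $t$ in the tolerance. The most natural attempt is to partition $\mathcal{X}$ into the $O(t)$ level sets $A_j$ of $w_t$, query $\mathbb{E}_P[\mathbf{1}_{A_j} h c]$ on each, and aggregate with weights $w_j/Z$; but while each $w_j/Z$ is individually bounded by $C/\epsilon^3$, there is no reason for the \emph{sum} $\sum_j w_j/Z$ to be $o(t)$, so the worst-case aggregated error is $\Omega(t\tau'/\epsilon^3)$, forcing tolerance $\Omega(\tau\epsilon^3/t)$. The layer-cake representation above sidesteps this: its coefficients $(v_i - v_{i-1})$ telescope to at most $v_k - v_1 \le 2$, independently of $t$. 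The density bound from \cref{prop:bbm}(2) is used exactly once, in the $W/Z$ rescaling that brings $\phi$ into $[-1,1]$, so no further $t$-dependence leaks into the tolerance. All that remains is to check the routine bookkeeping about converting between $\{0,1\}$- and $\{-1,1\}$-valued queries and verifying that the constant function is a legitimate SQ query.
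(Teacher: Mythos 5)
Your proof is correct, but it is worth noting that the paper does not actually prove \cref{prop:aslam}: it imports the statement from \cite{aslam93} and only sketches the idea, namely that $w_t$ is constant on the $O(t)$ level sets of $r(x)$, so one can query $\mathbb{E}_P[\mathbf{1}_{A_j} h c]$ on each region $A_j$ and recombine with weights $w_j/Z$. Your argument replaces that partition-based recombination with a layer-cake/telescoping decomposition of the single bounded function $\phi = (w_t/W)h$, and this is a genuinely different (and sharper) route. As you correctly observe, the partition route aggregates the per-query tolerance with coefficients $w_j/Z$ whose sum can be as large as $\Theta(t/\epsilon^3)$ (e.g.\ many level sets of maximal weight and tiny $P$-mass), so it only yields tolerance $\Omega(\tau\epsilon^3/t)$; this is harmless for \cref{thm:sq2bmSQ} since $t\le T=\mathrm{poly}(d)\log(1/\epsilon)$, but it does not literally match the $\Omega(\tau\cdot\mathrm{poly}(\epsilon))$ claimed in the proposition. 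Your telescoping coefficients sum to $v_k-v_1\le 2$ independently of $t$, and the smoothness bound of \cref{prop:bbm} enters exactly once through $W/Z\le C/\epsilon^3$, so you recover the stated tolerance exactly with $O(t)$ queries. Two minor points to tighten: take $W=\max\{w_t(x): P(x)>0\}$ so that $W/Z\le C/\epsilon^3$ genuinely follows from $P_t(x)=w_t(x)P(x)/Z\le (C/\epsilon^3)P(x)$ (the unrestricted maximum could a priori be attained off the support of $P$), and note that the argument presupposes $Z>0$, which is guaranteed by BBM's abort condition. Both are cosmetic; the proof stands.
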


\paragraph{Upper bounding the SQ-dimension of SQ-BBM's distributions.}
In this part we derive an upper bound on the SQ-dimension of the
distribution $P_{1},\ldots,P_{T}$ maintained by SQ-BBM. To this end
we use our assumption that for all $Q\in\mathcal{P}_{\mu}(P)$,
$\mathrm{SQ}_Q(\mathcal{C})\le d$ where $\mu= \max \{C/\epsilon^{3}, 4d\}$.
While we cannot make sure that the distributions $P_{1},\ldots,P_{T}$
belong to $\mathcal{P_{\mu}}(P)$, we will still be able to derive
an upper bound on their SQ-dimension.
\begin{lem} \label{lem:upperSQdimBBM}
Let $P_{1},\ldots,P_{T}$ be the distributions maintained by SQ-BBM.
For every $t=1,\ldots T$, $\mathrm{SQ}_{P_{t}}(\mathcal{C})\le4d$.
\end{lem}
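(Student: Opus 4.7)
I would argue by contradiction: assume there exist $h_1,\ldots,h_{4d+1}\in\mathcal{C}$ witnessing $\mathrm{SQ}_{P_t}(\mathcal{C})\ge 4d+1$, i.e. $|\langle h_i,h_j\rangle_{P_t}|\le 1/(4d+1)$ for all $i\ne j$, and derive a contradiction with the hypothesis that $\mathrm{SQ}_Q(\mathcal{C})\le d$ for every $Q\in\mathcal{P}_\mu(P)$.

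The main obstacle is that $P_t$ itself is generally \emph{not} in $\mathcal{P}_\mu(P)$: the boosting analysis (\cref{prop:bbm}) only supplies the one-sided bound $P_t(x)\le \mu_0 P(x)$ with $\mu_0=C/\epsilon^3$, and nothing prevents $P_t$ from being arbitrarily smaller than $P$ (or zero) on some atoms. Hence the SQ-dimension hypothesis cannot be applied to $P_t$ directly. The fix is to smooth $P_t$ by a small admixture of $P$. Set
\[
Q \;:=\; \alpha P + (1-\alpha)P_t, \qquad \alpha \;:=\; 1/\mu.
\]
Then $Q(x)\ge \alpha P(x) = P(x)/\mu$ while $Q(x)\le \alpha P(x) + (1-\alpha)\mu_0 P(x)\le \mu_0 P(x)\le \mu P(x)$, using $\mu\ge \mu_0$. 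Therefore $Q\in\mathcal{P}_\mu(P)$, so by assumption $\mathrm{SQ}_Q(\mathcal{C})\le d$.

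It remains to transfer the near-orthogonality of the $h_i$ from $P_t$ to $Q$. Since $\langle h_i,h_j\rangle_Q = \alpha\langle h_i,h_j\rangle_P + (1-\alpha)\langle h_i,h_j\rangle_{P_t}$ and $|\langle h_i,h_j\rangle_P|\le 1$, the triangle inequality gives, for every $i\ne j$,
\[
|\langle h_i,h_j\rangle_Q| \;\le\; \alpha + \frac{1-\alpha}{4d+1} \;\le\; \frac{1}{4d} + \frac{1}{4d+1} \;\le\; \frac{1}{d+1},
\]
where the second inequality uses $\alpha = 1/\mu \le 1/(4d)$ (since $\mu\ge 4d$) and the last uses $d\ge 1$. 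Consequently any $d+1$ of the $h_i$ form a system of functions with pairwise $Q$-correlations at most $1/(d+1)$, witnessing $\mathrm{SQ}_Q(\mathcal{C})\ge d+1$ and contradicting $\mathrm{SQ}_Q(\mathcal{C})\le d$.

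\textbf{Where the constant $4$ comes from.} The choice $\alpha=1/\mu$ balances two opposing constraints: $\alpha\ge 1/\mu$ is forced by the lower-bound half of $\mu$-closeness, while $\alpha$ must be small enough that the uncontrolled contribution $\alpha|\langle h_i,h_j\rangle_P|\le \alpha$ stays below the target $1/(d+1)$. The inequality $\alpha + (1-\alpha)/(4d+1)\le 1/(d+1)$ works out precisely when $\mu\gtrsim 4d$, which is why the lemma's hypothesis requires $\mu\ge 4d$ and yields the constant $4$ in the conclusion.
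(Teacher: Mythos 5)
Your proof is correct and follows essentially the same route as the paper's: mix $P_t$ with a $1/\mu$ fraction of $P$ to land in $\mathcal{P}_\mu(P)$, then transfer the near-orthogonality via the triangle inequality. The only cosmetic difference is that you contradict the bound with $d+1$ of the $4d+1$ witnesses while the paper uses $2d$ of $4d$; both are valid.
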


\paragraph{Putting it all together.}
We now complete the proofs of \cref{thm:sq2bmSQ} and \cref{thm:sq2bmPAC}.

\begin{proof}[Proof of \cref{thm:sq2bmSQ}]
From \cref{prop:weakSQ} we conclude that for any $Q \in \mathcal{P}_{\mu}(P)$ there exists a $(1/d)$-weak learner with query complexity $d$ and tolerance $1/(3d)$. Using this weak learner we apply SQ-BBM as described above. From \cref{lem:upperSQdimBBM} we know that for every distribution $P_t$ maintained by SQ-BBM, $\mathrm{SQ}_{P_t}(\mathcal{C})=O(d)$. Combining \cref{prop:bbm} and \cref{prop:aslam} we conclude that SQ-BBM reaches a $1-\epsilon$ accurate prediction after $T=O(\mathrm{poly}(d)\log(1/\epsilon))$ iterations while using at most $\mathrm{poly}(d/\epsilon)$ statistical queries with tolerance at least $\mathrm{poly}(\epsilon/d)$. 
\end{proof}

\begin{proof}[Proof of \cref{thm:sq2bmPAC}]
 \cref{prop:steinhardt} tells us that if a class $\mathcal{C}$
can be learned using $q$ statistical queries with tolerance $\tau$, then there is a PAC algorithm that learns $\mathcal{C}$ using
$m=O(\frac{q\log|\mathcal{C}|}{\tau^{2}}(\log(q)+\log\log(|\mathcal{C}|))$
samples and $b=O(\log|\mathcal{C}|\cdot\log(\frac{q}{\tau}))$ bits. \cref{thm:sq2bmSQ} gives an SQ learning algorithm $q = \mathrm{poly}(d/\epsilon)$ and $\tau \ge \mathrm{poly}(\epsilon/d)$, which gives a bounded memory learning algorithm with $m=\text{poly}(d/\epsilon) \cdot \log|\mathcal{C}| \cdot \log\log |\mathcal{C}|$ samples and $b=O(\log |\mathcal{C}| \cdot \log(d/\epsilon))$ bits.

\end{proof}



\section{From bounded memory learning to bounded SQ dimension}\label{sec:bounded_memory_implies_bounded_sq}



In this section we prove our lower bounds: \cref{thm:bm2sqPAC} and \cref{thm:bm2sqSQ}. A schematic illustration of the proof is given in  \cref{fig:strong_learning_bounded_SQ}. 

\begin{figure}
\begin{center}
\includegraphics[scale=0.47]{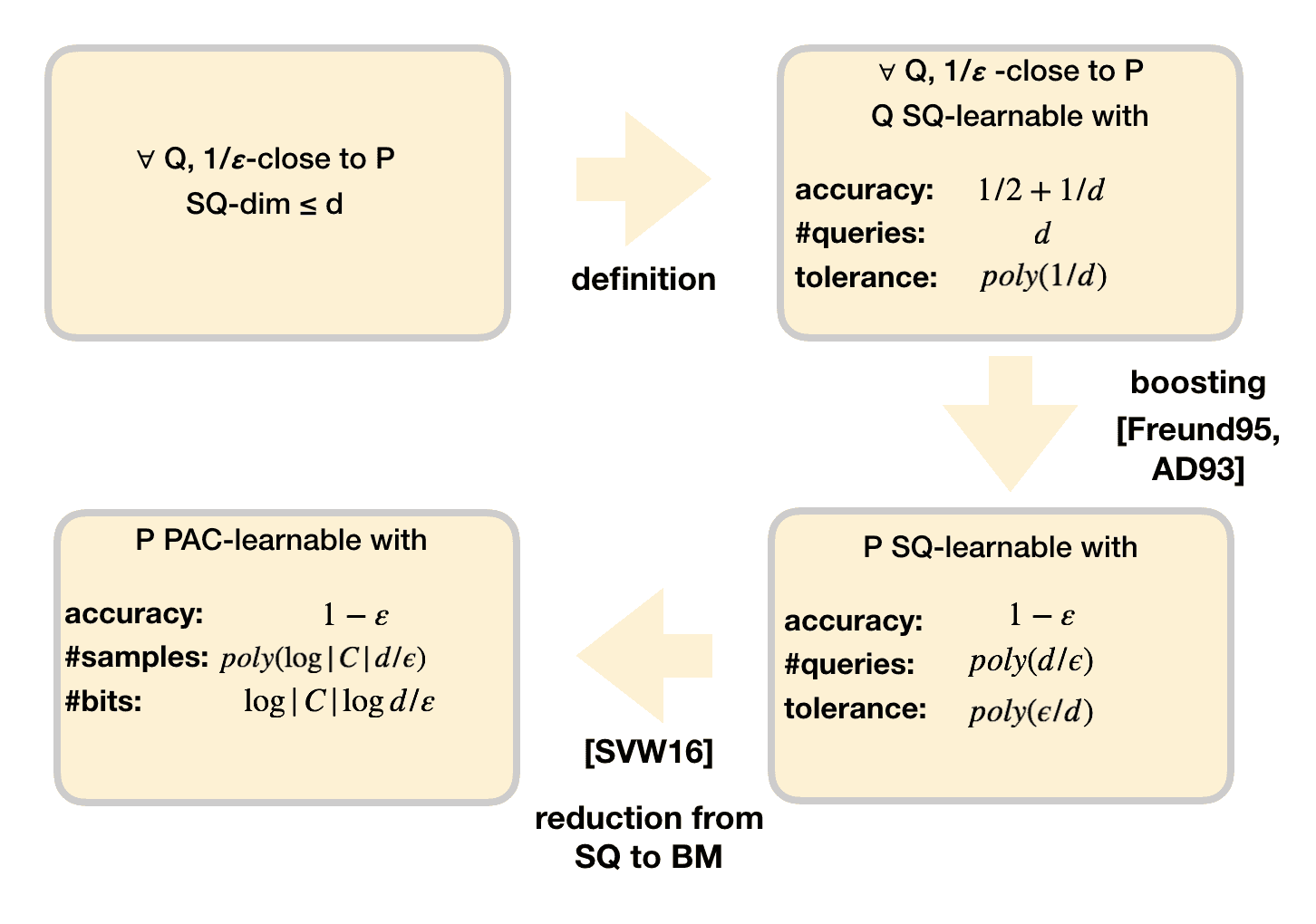}
\caption{Proof outline (with asymptotic terms): from strong learnability to bounded SQ-dimension under close distributions in PAC and SQ models.}
\label{fig:strong_learning_bounded_SQ}
\end{center}
\end{figure}

\paragraph{Overview.} 

We use the rejection sampling technique to transform a given 
strong learner with respect to distribution $P$ into a weak learner with respect to any close distribution $Q$. 
This can be established both in the PAC learning framework and the SQ framework. By virtue of \cref{prop:SQ_lower_bound}, this implies \cref{thm:bm2sqSQ}. To prove \cref{thm:bm2sqPAC}, we would like to use a recent result by \cite{garg19} that establishes an upper bound on $\mathrm{SQ}_Q(\mathcal{C})$ given memory-efficient learner. Unfortunately, the derivation in \cite{garg19} requires the learner to return the \emph{exact} target concept. Our weak learner does not necessarily satisfy this requirement. In fact, it is even not necessarily proper, i.e., it might return a hypothesis $h \notin \mathcal{C}$. To get around this obstacle, we first show how to transform any improper weak learning rule into a proper learning rule. Then, we focus on the hypotheses $\mathcal{H}\subseteq\mathcal{C}$ that constituents that SQ dimension, i.e, $SQ_Q(\mathcal{H})=SQ_Q(\mathcal{C})$. We ensure that the exact target concept $c$ is returned, as large $SQ_Q(\mathcal{H})$ implies that all hypotheses in $\mathcal{H}$ are far a part. 


\paragraph{From strong learning to weak learning of close distributions.}

The next claim shows that if a class is strongly learnable under distribution $P$, then it is weakly learnable under \emph{any} close distribution $Q$. The idea is to utilize the closeness assumption in order to perform rejection sampling from $Q$ to simulate sampling from $P$. \begin{lem}\label{clm:strong_to_weak_pac_rejection_sampling} Let $P$ be a distribution
over $\mathcal{X}$. Assume that the concept class $\mathcal{C}$
can be learned with accuracy $1-0.1\epsilon$, 
$m$ samples, and $b$
bits under distribution $P$. Then, any probability distribution
$Q$ that is ($1/\epsilon$)-close to $P$ can be learned with accuracy $0.9$, 
$O(m/\epsilon^2)$ samples, and $b$ bits. 
\end{lem}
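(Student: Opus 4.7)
}

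The plan is to run the strong $P$-learner using samples simulated from $P$ via rejection sampling applied to the stream of $Q$-samples, and then argue that the returned hypothesis, which has small $P$-error, also has small $Q$-error because $Q$ is pointwise bounded by $(1/\epsilon)\,P(x)$.

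Concretely, I would define the acceptance probability $p(x) := \epsilon \cdot P(x)/Q(x)$. The assumption that $Q \in \mathcal{P}_{1/\epsilon}(P)$ gives $\epsilon \cdot P(x)/Q(x) \in [\epsilon^2, 1]$, so $p(x)$ is a valid probability. A one-line calculation shows that if $x \sim Q$ and we accept with probability $p(x)$, then conditioned on acceptance $x$ is distributed exactly as $P$; moreover the unconditional acceptance rate is $\mathbb{E}_{x \sim Q}[p(x)] = \epsilon \sum_{x} P(x) = \epsilon$. Hence if we draw $N = \Theta(m/\epsilon^2)$ samples from $Q$ (in the streaming model, processing each on the fly), a Chernoff bound shows that with probability at least, say, $0.99$ we accept at least $m$ samples, all i.i.d.\ from $P$.

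Feed the first $m$ accepted samples to the strong learner guaranteed by the hypothesis. With probability at least $2/3$ it outputs a hypothesis $h$ with $L_{P,c}(h) \le 0.1\epsilon$. Since $Q(x) \le (1/\epsilon)\, P(x)$ pointwise,
\[
L_{Q,c}(h) \;=\; \sum_{x:\, h(x)\neq c(x)} Q(x) \;\le\; \tfrac{1}{\epsilon}\sum_{x:\, h(x)\neq c(x)} P(x) \;=\; \tfrac{1}{\epsilon}\, L_{P,c}(h) \;\le\; 0.1,
\]
so $h$ is $0.9$-accurate under $Q$. A union bound over the rejection-sampling failure event and the learner's failure event leaves overall success probability at least $2/3$ (after adjusting constants), which can be boosted if desired.

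The memory bookkeeping is the only thing to double-check but I do not expect it to be an obstacle: since both $P$ and $Q$ are known to the learner and $p(x)$ depends only on the currently observed example, the rejection test uses no persistent state beyond what the base learner maintains, so the simulated learner still runs in $b$ bits. The mildly looser $O(m/\epsilon^2)$ bound on $Q$-samples (rather than the naive $O(m/\epsilon)$) is what falls out of requiring a high-probability guarantee on the number of accepted samples via Chernoff, and this accounts for the exponent in the lemma statement.
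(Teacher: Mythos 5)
Your proposal is correct and follows essentially the same route as the paper: rejection sampling with acceptance probability $\epsilon P(x)/Q(x)$ (valid by $(1/\epsilon)$-closeness), a concentration bound to guarantee $m$ accepted samples from $O(m/\epsilon^2)$ draws, the pointwise bound $Q(x) \le (1/\epsilon)P(x)$ to convert $0.1\epsilon$ error under $P$ into $0.1$ error under $Q$, and the observation that the rejection test is stateless so the bit complexity stays at $b$. The only quibble is your closing remark attributing the $\epsilon^2$ in the sample bound to the Chernoff argument at acceptance rate $\epsilon$ (which would give $O(m/\epsilon)$); the paper instead uses the pointwise lower bound of $\epsilon^2$ on the acceptance probability, but either way the stated $O(m/\epsilon^2)$ holds.
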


\paragraph{Rejection sampling algorithm in the SQ model.}
Analogously to  \cref{clm:strong_to_weak_pac_rejection_sampling}, we can show that also under the SQ framework, strong learning implies weak learning of close distributions. The proof uses the same rejection sampling technique as in \cref{clm:strong_to_weak_pac_rejection_sampling}. 

\begin{lem}\label{thm:bm2sq_rejection_sampling_sq} Let $P$ be a distribution
over $\mathcal{X}$. Assume that the concept class $\mathcal{C}$
can be learned with accuracy $1-0.1\epsilon$, $q$ queries and tolerance $\tau$ under distribution $P$. Then, any probability distribution
$Q$ that is ($1/\epsilon$)-close to $P$ can be SQ-learned with accuracy $0.9$ using $O(q/\epsilon\tau)$ queries with tolerance $\epsilon\tau/2$. 
\end{lem}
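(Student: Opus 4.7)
The plan is to simulate the given strong SQ-learner for $P$ using only the provided SQ-oracle for $Q$, in direct analogy to the rejection-sampling argument of \cref{clm:strong_to_weak_pac_rejection_sampling}. Once the simulated $P$-learner returns a hypothesis $h$ with $L_{P,c}(h) \le 0.1\epsilon$, the $(1/\epsilon)$-closeness of $P$ and $Q$ immediately gives
\[
L_{Q,c}(h) \;=\; \Pr_{x\sim Q}[h(x)\neq c(x)] \;\le\; \tfrac{1}{\epsilon}\,L_{P,c}(h) \;\le\; 0.1,
\]
so $h$ is a $0.9$-accurate hypothesis under $Q$. All the substance thus lies in simulating each of the $q$ statistical queries that the $P$-learner issues using only a modest number of $Q$-queries at an appropriate tolerance.

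The simulation rests on the importance-weighting identity
\[
\langle h, c \rangle_P \;=\; \mathbb{E}_{x \sim Q}\!\left[ w(x)\, h(x)\, c(x) \right], \qquad \text{where } w(x) := P(x)/Q(x),
\]
together with the fact that $(1/\epsilon)$-closeness forces $w(x) \in [\epsilon, 1/\epsilon]$. Rescaling by $\epsilon$ gives a functional $g(x) := \epsilon\, w(x)\, h(x)$ that takes values in $[-1,1]$ and satisfies $\mathbb{E}_{x \sim Q}[g(x) c(x)] = \epsilon\,\langle h, c \rangle_P$; hence estimating the right-hand side to additive accuracy $\Theta(\epsilon\tau)$ translates into an estimate of $\langle h, c \rangle_P$ with the tolerance $\tau$ expected by the $P$-learner.

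To realize this estimate using the $\{-1,1\}$-valued general-form queries $\chi:\mathcal{X}\times\{-1,1\}\to\{-1,1\}$ admitted by the model (see the footnote on correlation queries), I would partition $\mathcal{X}$ into $K = O(1/(\epsilon\tau))$ cells $A_1,\dots,A_K$ on which $w(\cdot)$ is essentially constant, with representative weight $w_k$ on $A_k$. Because the learner knows $Q$, the partition and the constants $Q(A_k^c)$ are computable up front. For each cell, I ask the $Q$-oracle the $\pm 1$-valued query $\chi_k(x,y) := h(x)\,y$ for $x \in A_k$ and $\chi_k(x,y) := +1$ for $x\notin A_k$, whose expectation under $Q$ is $\mathbb{E}_Q[h(x)c(x)\mathbf{1}_{A_k}(x)] + Q(A_k^c)$; subtracting the known $Q(A_k^c)$ recovers $\mathbb{E}_Q[h\,c\,\mathbf{1}_{A_k}]$. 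The final estimator $\sum_k w_k\,\mathbb{E}_Q[h\,c\,\mathbf{1}_{A_k}]$ then approximates $\langle h,c\rangle_P$ to within $\tau$ once the cell widths and the per-query tolerance are set appropriately. This uses $O(1/(\epsilon\tau))$ $Q$-queries per $P$-query, so $O(q/(\epsilon\tau))$ in total, with per-query tolerance $\epsilon\tau/2$.

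The main obstacle is calibrating the partition so that the two sources of error, namely the within-cell discretization of $w$ and the accumulated per-cell oracle tolerance (whose contribution to the aggregate estimator is weighted by $w_k$ and therefore amplified by up to $1/\epsilon$), jointly stay below the target tolerance $\tau$. A multiplicative partition of the interval $[\epsilon,1/\epsilon]$ is convenient and keeps the bookkeeping clean. Apart from this balancing, the remainder of the argument is a direct transcription of the PAC rejection-sampling template of \cref{clm:strong_to_weak_pac_rejection_sampling} into the SQ query model.
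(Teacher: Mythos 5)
Your high-level plan (importance weighting $\langle h,c\rangle_P=\mathbb{E}_{x\sim Q}[w(x)h(x)c(x)]$ with $w=P/Q\in[\epsilon,1/\epsilon]$, rescaling by $\epsilon$ to land in $[-1,1]$, then discretizing into $\{-1,1\}$-valued queries) is exactly the paper's, and the accuracy transfer $L_{Q,c}(h)\le\frac{1}{\epsilon}L_{P,c}(h)\le 0.1$ is correct. The gap is in the step you yourself flag as "the main obstacle" and then do not resolve: your aggregation is a \emph{weighted} sum $\sum_k w_k\,\nu_k$ over the level sets of $w$, so each oracle error of magnitude $\tau_0$ is amplified by $w_k$ and the total error from the oracle tolerances is of order $\tau_0\sum_k w_k$. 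For a multiplicative partition of $[\epsilon,1/\epsilon]$ with ratio $1+\Theta(\tau)$ one has $\sum_k w_k=\Theta(1/(\epsilon\tau))$, so with your stated per-query tolerance $\tau_0=\epsilon\tau/2$ the estimator of $\langle h,c\rangle_P$ can be off by $\Theta(1)$ --- no guarantee at all. Making your scheme work forces $\tau_0=O(\epsilon\tau^2)$ (and an additive partition is even worse, $O(\epsilon^2\tau^2)$), which proves a quantitatively weaker lemma than the one stated (still sufficient for \cref{thm:bm2sqSQ} via \cref{prop:SQ_lower_bound}, but not the claim as written).

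The paper sidesteps this amplification by pushing the weights \emph{inside} the queries rather than using them as coefficients on the answers: for each $P$-query $\psi$ it builds $n=O(1/(\epsilon\tau))$ genuinely $\{-1,1\}$-valued functions $\psi_1,\dots,\psi_n$ whose \emph{pointwise unweighted average} satisfies $\bigl|\frac{1}{n}\sum_i\psi_i(x)-\epsilon\,w(x)\psi(x)\bigr|\le\epsilon\tau/2$ for every $x$ (this is the elementary \cref{lem:bm2sq_rejection_sampling_sq}, a unary encoding of the number $\epsilon w(x)\psi(x)\in[-1,1]$). The estimator is then $\frac{1}{\epsilon}\cdot\frac{1}{n}\sum_i\nu_i$: the per-query errors only \emph{average} (contributing $\tau_0$), and the single $1/\epsilon$ rescaling at the end gives total error $\tau_0/\epsilon+\tau/2=\tau$ with $\tau_0=\epsilon\tau/2$. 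If you replace your weighted-bin aggregation with this unweighted-average encoding, the rest of your argument goes through and yields the stated parameters. (A minor additional point: your per-cell query $h\mathbf{1}_{A_k}$ is $\{-1,0,1\}$-valued, not $\{-1,1\}$-valued, so even that step would need a further averaging trick to fit the correlation-query model.)
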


\paragraph{From weak learning to low SQ-dimension.}
The next few claims establish the fact that if a class $\mathcal{C}$ is learnable with bounded memory under distribution $Q$, then the statistical dimension $SQ_Q(\mathcal{C})$ is low. 
\begin{prop}[Corollary 8 in \cite{garg18}]\label{clm:lower_bound_for_sq}
Let $\mathcal{H} = \{h_1,\ldots,h_d\}$ be a class and $Q$ a distribution such that $SQ_Q(\mathcal{H})=d$. Then any learning algorithm that uses $m$ samples, $b$ bits and returns the exact correct hypothesis with probability at least $\Omega(1/m)$ must use at least $m=d^{\Omega(1)}$ samples or $b=\Omega(\log^2 d)$ bits.\footnote{In \cite{garg18} they consider the case where $Q$ is the uniform distribution. By creating a few copies of the examples in $\mathcal{X}$ we can transform a general \emph{known} distribution to be as close as to uniform as needed. Note that the size of the domain $\mathcal{X}$ is not a relevant parameter here.}
\end{prop}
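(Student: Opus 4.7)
My plan is to adapt the branching-program technique of Raz~\cite{raz16} and its matrix-based generalization by Garg, Raz, and Tal~\cite{garg18}. I would model the learner as a layered branching program of length $m$ and width $2^b$: a path from the root corresponds to a sequence of samples $(x_1, h^*(x_1)), \ldots, (x_m, h^*(x_m))$ with $x_t \sim Q$ i.i.d., where the target $h^*$ is placed uniformly at random in $\mathcal{H} = \{h_1,\ldots,h_d\}$. The assumption says that the program reaches a leaf labeled $h^*$ with probability $\Omega(1/m)$.

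First I would reduce the ``output the exact hypothesis'' success condition to a posterior-concentration statement. A standard pigeon-hole / averaging argument shows that if the program identifies $h^*$ exactly with probability $\Omega(1/m)$ under the uniform prior, then there must exist a vertex $v$ and some $h_i\in\mathcal{H}$ for which the posterior $\Pr[h^* = h_i \mid \text{program reaches } v]$ is $\Omega(1)$, much larger than the prior $1/d$. So the task reduces to showing that no small, short branching program has a vertex on which the posterior on $\mathcal{H}$ is so concentrated.

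The technical heart is to use the near-orthogonality condition $|\langle h_i, h_j\rangle_Q| \le 1/d$ to show that posterior concentration can only build up very slowly along the program. Arrange the hypothesis values into the matrix $M \in \mathbb{R}^{\mathcal{X}\times\mathcal{H}}$ with entries $M_{x,h} = \sqrt{Q(x)}\,h(x)$; the SQ-dimension bound says that the Gram matrix $M^\top M$ is close to the identity, so all nontrivial singular values of $M$ are $O(1/\sqrt{d})$. Plugging this spectral bound into the Garg--Raz--Tal framework, one defines a potential $\Phi_v$ on vertices (morally, an $\ell_2$ or $\ell_\infty$ deviation of the posterior from uniform on $\mathcal{H}$), shows that along a typical outgoing edge $\Phi$ changes by only a $\mathrm{poly}(1/d)$ factor, and partitions edges into ``good'' ones (where the potential grows negligibly) and rare ``bad'' ones (which can grow it more but carry tiny probability mass). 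A careful union bound over paths then controls the number of vertices at which the potential can ever become large.

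Finally I would combine these ingredients: the number of ``significant'' vertices that could carry a concentrated posterior is at most some function of $d$, while the program as a whole has at most $m \cdot 2^b$ vertices. Unless $m \ge d^{\Omega(1)}$ or $b \ge \Omega(\log^2 d)$, the total measure reaching significant vertices is $o(1/m)$, contradicting the reduction in the first step. The main obstacle will be the potential-function bookkeeping in the third step: getting the $\Omega(\log^2 d)$ bit lower bound (as opposed to a trivial $\Omega(\log d)$) requires the two-scale accounting of edges that is the technical core of~\cite{garg18}, and it is exactly here that all the work lies.
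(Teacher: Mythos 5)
The first thing to note is that the paper does not prove this proposition at all: it is imported verbatim as Corollary~8 of \cite{garg18}, and the only original content is the footnote reducing a general \emph{known} distribution $Q$ to the uniform one by duplicating domain elements. So you are not matching a proof in the paper --- you are attempting to reprove the cited black box. Judged on those terms, your outline does track the actual Garg--Raz--Tal argument faithfully at a high level: model the learner as a width-$2^b$, length-$m$ branching program; observe that the near-orthogonality $|\langle h_i,h_j\rangle_Q|\le 1/d$ makes the (suitably normalized) concept matrix have small top singular value, hence makes it an $L_2$-extractor; and run the progress/potential argument over significant vertices. Two remarks on the setup: your device of folding $Q$ into the matrix via $M_{x,h}=\sqrt{Q(x)}\,h(x)$ is a reasonable alternative to the paper's duplication footnote, but the \cite{garg18} machinery is stated for i.i.d.\ uniform samples, so you would either still need the duplication trick or rework their argument for weighted rows; and your singular-value claim needs care with normalization --- with unit-norm columns the content of near-orthogonality is that $\sigma_{\max}(M)=O(1)$ rather than the trivial $\sqrt{d}$, not that the nontrivial singular values are $O(1/\sqrt d)$.

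The genuine gap, if this is meant as a self-contained proof, is that the entire technical core is deferred. The definition of significant vertices and significant values, the truncation of high-probability edges and rows, the inductive bound showing the $\ell_2$-potential of posteriors grows by at most a controlled factor per layer, and the union bound yielding the specific tradeoff $m=d^{\Omega(1)}$ versus $b=\Omega(\log^2 d)$ --- this is not bookkeeping but the whole proof, and you explicitly leave it out (``it is exactly here that all the work lies''). Also, your opening reduction (``there exists a vertex with posterior $\Omega(1)$ on some $h_i$'') oversimplifies the role of the $\Omega(1/m)$ success threshold: in \cite{garg18} one bounds the probability that the \emph{truncated} computation path ever reaches a significant vertex and separately bounds the success probability conditioned on never reaching one; the $1/m$ enters when comparing these bounds to the claimed success probability, not through a one-shot pigeonhole. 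If your goal is what the paper does, a citation plus the distribution-reduction footnote suffices; if your goal is a proof, you still owe the potential-function argument.
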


The algorithm described in the previous section will not return the exact hypothesis, and more generally will not even be a proper learner (i.e., it will not necessarily return a hypothesis from the class). Fortunately, we can transform any improper learner into a proper learner without significantly increasing the neither the sample nor the space complexity. 
\begin{lem}\label{clm:proper_then_improper} Fix a class $\mathcal{C}$. Let $\mathcal{A}$ be an improper learning algorithm for $\mathcal{C}$
that uses $b$ bits, $m$ samples, and accuracy $1-\epsilon$.
Then there is an $(1-3\epsilon)$-accurate \emph{proper} learning algorithm
that uses $O(m)$ samples and $b+O(\log(|\mathcal{C}|/\epsilon))$ bits. 
\end{lem}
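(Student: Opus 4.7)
The plan is to apply the classical improper-to-proper reduction via empirical risk minimization on the improper hypothesis, adapted to the bounded-memory streaming model by leveraging knowledge of $P$. First, I would run $\mathcal{A}$ on the incoming labeled stream to obtain a hypothesis $h$ with $\Pr_{x \sim P}[h(x) \neq c(x)] \le \epsilon$ with constant probability; the function $h$ is represented implicitly by the $b$-bit final state of $\mathcal{A}$ together with its (free) output map. Then I would select a proper hypothesis $c^* \in \mathcal{C}$ that approximately minimizes $d(c', h) := \Pr_{x \sim P}[c'(x) \neq h(x)]$ over $c' \in \mathcal{C}$. Since the unknown target $c$ lies in $\mathcal{C}$ and satisfies $d(c, h) \le \epsilon$, the triangle inequality yields $\Pr_{x \sim P}[c^*(x) \neq c(x)] \le d(c^*, h) + d(h, c) \le 3\epsilon$, as long as $c^*$ is within $2\epsilon$ of the minimum.

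To carry out the minimization in bounded memory, I would sequentially enumerate $\mathcal{C}$ using a $\log|\mathcal{C}|$-bit index register. For each candidate $c' \in \mathcal{C}$, draw $k = O(\log(|\mathcal{C}|/\epsilon)/\epsilon^2)$ fresh unlabeled samples from $P$ and maintain a counter of disagreements to form the empirical estimate $\hat d(c', h)$, keeping (throughout the enumeration) only the current best index $c^*$ together with its score. The key observation, which avoids any blow-up in the labeled sample complexity, is that $P$ is known to the learner, so the projection-phase samples can be generated internally and do not consume examples from the labeled stream --- only the initial $m$ labeled samples fed to $\mathcal{A}$ count against the stream budget. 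Summing up, the space usage is the $b$ bits retained by $\mathcal{A}$ plus $O(\log|\mathcal{C}|)$ for the two indices and $O(\log\log|\mathcal{C}| + \log(1/\epsilon))$ for the counter and stored best score, giving $b + O(\log(|\mathcal{C}|/\epsilon))$ bits overall.

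Correctness follows from a Chernoff plus union bound argument: taking the tolerance $\tau = \epsilon/2$ with $k = O(\log(|\mathcal{C}|/\epsilon)/\epsilon^2)$, every $\hat d(c',h)$ is within $\tau$ of its true value $d(c',h)$ simultaneously for all $c'\in \mathcal C$ with probability at least $5/6$; combined with $\mathcal{A}$'s success event this yields $1-3\epsilon$ accuracy with constant probability (which can be amplified externally if desired). The main subtle point, which I would flag as the chief obstacle, is that a naive implementation drawing the projection samples from the labeled stream would either have to store them (infeasible in $b + O(\log(|\mathcal{C}|/\epsilon))$ bits, since the single-pass streaming model forbids re-reading) or draw $|\mathcal{C}| \cdot k$ fresh labeled samples (far exceeding $O(m)$ in general). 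The standing assumption that $P$ is known to the learner is precisely what lets us side-step this issue and keep the stream sample complexity at $O(m)$, and it is also what makes the lemma natural in the context of this paper.
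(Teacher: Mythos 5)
Your proof is correct and follows essentially the same route as the paper's: run $\mathcal{A}$ to get $h$, then enumerate $\mathcal{C}$ with a $\log|\mathcal{C}|$-bit index, estimate each candidate's disagreement with $h$ via Hoeffding plus a union bound using $O(\log(|\mathcal{C}|/\epsilon)/\epsilon^2)$ samples, and conclude $1-3\epsilon$ accuracy by the triangle inequality through $h$. You are in fact more explicit than the paper on the one subtle point --- that the projection-phase samples are unlabeled, generated internally from the known $P$, and hence cost nothing against the labeled stream --- which the paper compresses into the bare assertion that ``the second step does not use new samples.''
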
 

\begin{lem}\label{thm:bm2sq_learning_than_small_sq}
Fix a class $\mathcal{C}$ and a distribution $Q$. 
If $\mathcal{C}$ is learnable with accuracy $0.9$ under $Q$ using $m$ samples and $b$ bits,  then $$SQ_Q(\mathcal{C})\leq\max(m^{O(1)}, 2^{O(\sqrt{b})}).$$
\end{lem}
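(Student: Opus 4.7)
Let $d := \mathrm{SQ}_Q(\mathcal{C})$ and fix a witnessing family $\mathcal{H} = \{h_1,\ldots,h_d\} \subseteq \mathcal{C}$ with $|\langle h_i, h_j\rangle_Q| \le 1/d$ for every $i \neq j$. The key observation I will use is $\Pr_{x \sim Q}[h_i(x) \neq h_j(x)] = (1 - \langle h_i, h_j\rangle_Q)/2 \ge 1/2 - 1/(2d)$, so distinct members of $\mathcal{H}$ disagree on nearly half of $Q$'s mass. My plan is to exploit this pairwise separation to convert the given $0.9$-accurate learner into an algorithm that identifies the target \emph{exactly} whenever it lies in $\mathcal{H}$, which is precisely the regime in which \cref{clm:lower_bound_for_sq} delivers the desired lower bound.

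\paragraph{Snapping to $\mathcal{H}$.} Given the learner $\mathcal{A}$ of sample complexity $m$ and bit complexity $b$, I will define $\mathcal{A}'$ by running $\mathcal{A}$ to obtain $h = \phi(Z_m)$ and then post-processing with the decoder $\hat i := \arg\min_{j \in [d]} \Pr_{x \sim Q}[h(x) \neq h_j(x)]$, outputting $h_{\hat i}$. Because $Q$ and $\mathcal{H}$ are fixed reference objects (just like the update maps $\psi_t$ and the original output map $\phi$), the decoder is absorbed into a new output map, costing no additional samples and no additional memory during the stream. To verify correctness, suppose the target is $h_i \in \mathcal{H}$; with probability $\ge 2/3$ the output $h$ of $\mathcal{A}$ satisfies $\Pr_Q[h \neq h_i] \le 0.1$, so the triangle inequality gives, for every $j \neq i$,
\[
\Pr_Q[h \neq h_j] \ge \Pr_Q[h_i \neq h_j] - \Pr_Q[h \neq h_i] \ge \tfrac12 - \tfrac{1}{2d} - 0.1,
\]
which is strictly larger than $0.1$ whenever $d \ge 2$. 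Hence $\hat i = i$ and $\mathcal{A}'$ returns the exact target $h_i$ with probability at least $2/3$.

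\paragraph{Applying the exact-recovery lower bound.} For $d \le 1$ the claim is vacuous, so assume $d \ge 2$. Then $\mathcal{A}'$ uses $m$ samples and $b$ bits and returns the exact target from the class $\mathcal{H}$, whose SQ-dimension equals $d$, with probability $2/3 \gg \Omega(1/m)$. Plugging into \cref{clm:lower_bound_for_sq} yields that either $m \ge d^{\Omega(1)}$ or $b \ge \Omega(\log^2 d)$, which rearranges to $d \le \max(m^{O(1)}, 2^{O(\sqrt{b})})$, as desired. The delicate point --- and what I expect to be the main obstacle in writing this up carefully --- is the exact-recovery hypothesis of \cref{clm:lower_bound_for_sq}: approximate identification of the target is insufficient, so the whole argument must route through a pairwise-far subfamily $\mathcal{H}$ rather than working with the output of $\mathcal{A}$ directly. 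This is also why the improper-to-proper conversion of \cref{clm:proper_then_improper} is not needed here: the nearest-neighbour snap supplies both properness (with respect to $\mathcal{H}$) and exact recovery in one shot.
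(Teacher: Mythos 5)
Your proof is correct, and it follows the same skeleton as the paper's: restrict attention to a witnessing family $\mathcal{H}=\{h_1,\ldots,h_d\}$ with pairwise correlations at most $1/d$ (hence pairwise disagreement at least $\tfrac12-\tfrac1{2d}$ under $Q$), upgrade the $0.9$-accurate learner to an exact-recovery learner for $\mathcal{H}$, and invoke \cref{clm:lower_bound_for_sq}. Where you diverge is the upgrade step. The paper routes through \cref{clm:proper_then_improper}: it first converts the (possibly improper) learner into a proper learner for $\mathcal{H}$ by testing candidate hypotheses against random examples, which costs $O(\log(|\mathcal{C}|/\epsilon))$ extra bits and degrades the accuracy from $0.9$ to $0.7$, and only then argues that proper $0.7$-accurate learning of a pairwise-far class is exact recovery. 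You instead fold a nearest-neighbour decoder $\hat\imath=\arg\min_j\Pr_{x\sim Q}[h(x)\neq h_j(x)]$ directly into the output map $\phi$, computing the distances analytically from the known distribution $Q$ rather than empirically from samples; the triangle-inequality calculation ($\tfrac12-\tfrac1{2d}-0.1>0.1$ for $d\ge2$) then gives exact recovery with no accuracy loss, no extra samples, and no extra bits. This is a genuine simplification --- it makes \cref{clm:proper_then_improper} unnecessary for this lemma --- but it leans harder on two modelling conventions: that $Q$ is known to the learner and that the output map $\phi$ (like the update maps $\psi_t$) is computationally unrestricted and not charged to the space complexity. Both conventions are explicitly adopted in the paper (and \cref{clm:lower_bound_for_sq} is stated for branching programs with arbitrary leaf labellings), so your argument is sound as written; the paper's sample-based detour would be the one to fall back on in a model where the decoder had to be implemented within the memory budget.
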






\paragraph{Putting it all together.}

We now complete the proofs of \cref{thm:bm2sqPAC} and \cref{thm:bm2sqSQ}.

\begin{proof}[Proof of \cref{thm:bm2sqPAC}]
Assume that the concept class $\mathcal{C}$
can be learned with accuracy $1-0.1\epsilon$, 
$m$ samples, and $b$
bits under distribution $P$. \cref{clm:strong_to_weak_pac_rejection_sampling} states that any probability distribution
$Q$ that is ($1/\epsilon$)-close to $P$ can be learned with accuracy $0.9$, 
$O(m/\epsilon^2)$ samples, and $b$ bits. 
\cref{thm:bm2sq_learning_than_small_sq} completes the claim. 
\end{proof}

\begin{proof}[Proof of \cref{thm:bm2sqSQ}]
Assume that the concept class $\mathcal{C}$
can be learned with accuracy $1-0.1\epsilon$, $q$ queries and tolerance $\tau$ under distribution $P$. \cref{thm:bm2sq_rejection_sampling_sq} states that any probability distribution
$Q \in \mathcal{P}_{1/\epsilon}(P)$
can be SQ-learned with accuracy $0.9$,  $O(m/\epsilon\tau)$ queries, and tolerance $\epsilon\tau/2$. 
\cref{prop:SQ_lower_bound} completes the claim.
\end{proof}

\newpage
\bibliographystyle{plain}
\bibliography{library}

\newpage

\appendix

\section{Omitted Proofs} \label{app:omitted}

\begin{proof}[Proof of \cref{lem:upperSQdimBBM}]
Let $\delta=1/\mu$.
Consider the mixed distribution $\tilde{P}_{t}=\delta P+(1-\delta)P_{t}$. 
\cref{prop:bbm} implies that for all $x$, $P_{t}(x)\le\mu P(x)$. It
follows that 
\[
(\forall x)\qquad\tilde{P}_{t}(x)\le\delta P(x)+(1-\delta) \mu P(x)\le \mu P(x).
\]
Also, clearly we have that
\[
(\forall x)\qquad\tilde{P}_{t}(x)\ge\delta P(x) = \mu^{-1} P(x).
\]
 Hence, $\tilde{P}_{t}\in\mathcal{P}_{\mu}(P)$, and by our assumption we have
$\mathrm{SQ}_{\tilde{P}_{t}}(\mathcal{C}) \le d$. 

Assume by contradiction that there exist $m\ge4d$ hypotheses  $h_{1},\ldots,h_{m}\in\mathcal{C}$
such that 
\[
|\langle h_{i},h_{j}\rangle_{P_{t}}|\le1/m\qquad(\forall i\neq j\in[m]).
\]
 Therefore, for all $i\neq j\in[m]$, 
\[
\left|\langle h_{i},h_{j}\rangle_{\tilde{P}_{t}}\right|=\left|\delta\langle h_{i},h_{j}\rangle_{P}+(1-\delta)\langle h_{i},h_{j}\rangle_{P_{t}}\right|\le\delta+(1-\delta)\frac{1}{m}\le\frac{1}{4d}+\frac{1}{4d}=\frac{1}{2d}.
\]
In particular, it follows that $|\langle h_{i},h_{j}\rangle_{\tilde{P}_{t}}|\le\frac{1}{2d}$
for all $i\neq j\in[2d]$. This contradicts the fact that $\mathrm{SQ}_{\tilde{P}_{t}}(\mathcal{C})\le d$.
\end{proof}

\begin{proof}[Proof of \cref{clm:strong_to_weak_pac_rejection_sampling}]
Fix a distribution $P$, a class $\mathcal{C}$ and assume that there is an algorithm $\mathcal{A}$ that learns $\mathcal{C}$ under $P$ with accuracy $1-0.1\epsilon$, $m$ samples, and $b$ bits. We want to show that for any ($1/\epsilon$)-close distribution  $Q\in\mathcal{P}_{1/\epsilon}(P)$ there is an algorithm that learns the class $\mathcal{C}$ under distribution $Q$ with accuracy $0.9$, $O(m/\epsilon^2)$ samples, and $b$ bits.

At a high level, our analysis involves two steps. First, given a close distribution $Q$ we apply the rejection sampling technique to simulate sampling from the original distribution $P$. This enables us to run the algorithm $\mathcal{A}$. Then we translate the accuracy guarantee of $\mathcal{A}$ with respect to $P$ into a an accuracy guarantee with respect to $Q$. 
\paragraph{Rejection sampling.}
In \cref{alg:rejectionSampling} we detail the rejection sampling step mentioned above.

\begin{algorithm}
    \caption{Learning from examples distributed by $Q$}
    \label{alg:rejectionSampling}
\begin{algorithmic}[1]
\STATE Get a labeled example $x$ from $Q$.
\STATE Accept $x$ with probability $\frac{P(x)}{Q(x)}\epsilon$.
\STATE Call algorithm $\mathcal{A}$ with the accepted examples.
\end{algorithmic}
\end{algorithm}
We first note that the rejection sampling is well defined. Namely, by the closeness assumption, $\frac{P(x)}{Q(x)}\epsilon \in [0,1]$. The distribution induced by the rejection sampling is proportional to $P $ since
\[
Q(x)\cdot \frac{P(x)}{Q(x)}\epsilon= P(x)\epsilon.
\]

\paragraph{Strong learning with respect to $P \Rightarrow$ weak learning with respect to $Q$.}
By our assumption on $\mathcal{A}$, with probability at least $2/3$, it outputs
a hypothesis $h$ with accuracy at least $1-0.1\epsilon$. We next prove that $h$ forms a weak classifier with respect to $Q$. Denoting the target hypothesis by $c\in\mathcal{C}$, we have that 
$$L_{Q,c}(h)=\sum_{x: h(x)\neq c(x)} Q(x)\leq \sum_{x: h(x)\neq c(x)} \frac{1}{\epsilon}\cdot P(x)=\frac{1}{\epsilon}\cdot L_{P,c}(h)\leq \frac{1}{\epsilon}\cdot 0.1\epsilon = 0.1\;.$$
Thus, the accuracy is at least $0.9$.

So far we proved that the we indeed designed a learning algorithm for $Q$. Let's analyze the parameters of the algorithm. 
The rejection sampling technique does not require additional bits, thus number of bits is the same as number of bits used in $\mathcal{A}$.
We next bound the number of samples needed. 

We first note that the probability to accept an example $x$ is  $\frac{P(x)}{Q(x)}\epsilon\geq \epsilon^2$, as $Q$ is ($1/\epsilon$)-close to $P$. 
From Hoeffding's inequality, we know that if we get at least $2m/\epsilon^2$ samples, then the probability that the algorithm does not accept at least $m$ samples is smaller than $e^{-m}$.
Thus, with probability at least $1-me^{-m}$, the number of samples used by the new algorithm is $O(m/\epsilon^2)$.

The confidence of the algorithm is at least $2/3- e^{-m}\cdot m\geq 7/12$ for large enough $m$. Standard amplification techniques can be used to ensure that the probability error is at most $2/3$, while increasing the sample complexity by at most a constant multiplicative factor.
\end{proof}

\begin{proof}[Proof of \cref{thm:bm2sq_rejection_sampling_sq}]
Fix a distribution $P$, a class $\mathcal{C}$ and assume that there is an algorithm $\mathcal{A}$ that learns $\mathcal{C}$ under $P$ with accuracy $1-0.1\epsilon$, $m$ queries, and tolerance $\tau$.
Denote the correct hypothesis by $c\in\mathcal{C}$. We want to show that for any $(1/\epsilon)$-close distribution  $Q\in\mathcal{P}_{1/\epsilon}(P)$ there is an algorithm that weakly learns the class $C$ under distribution $Q$ in the SQ framework. 

Fix a query $\psi$ that is used by $\mathcal{A}$. Ideally, we would like to replace it with a query $\psi'$ of the form 
$$
\psi'(x)=
\begin{cases}
\frac{P(x)}{Q(x)}\psi(x) \quad \text{ if } Q(x)\neq 0\\
0\quad\quad\quad\quad\quad otherwise
\end{cases}\;,
$$
since querying $\psi$ under $P$ is the same as querying $\psi'$ under $Q$, as $\mathbb{E}_Q[\psi'(x)c(x)]=\mathbb{E}_P[\psi(x)c(x)]$. The problem is that the range of $\psi'$ is not $\{-1,1\}$. 
To fix it, we will replace $\psi$ with several queries $\psi_1,\ldots,\psi_n$ that their range is $\{-1,1\}$ and their average, $\frac{1}{n}\sum_{i=1}^n\psi_i,$   approximately returns the correct query, i.e., $\psi'\approx\frac{1}{n}\sum_{i=1}^n\psi_i$.

For every $x\in\mathcal{X}$ we would like to use \cref{lem:bm2sq_rejection_sampling_sq} below in order to define $\psi_i(x)$. The first step will be to make sure that $\psi'(x)$ is in $[-1,1]$. To achieve that we focus on $\epsilon\psi'(x),$ because it is equal to $\epsilon\frac{P(x)}{Q(x)}\psi(x)$ and $$0<\epsilon\cdot\frac{P(x)}{Q(x)}\leq\epsilon\cdot\frac1\epsilon=1.$$
Using \cref{lem:bm2sq_rejection_sampling_sq}, there are $n=O(1/\epsilon\tau)$ queries $\psi_i$ such that for every $x\in\mathcal{X}$ it holds that $$\left\vert\frac1n\sum_{i=1}^n \psi_i(x)-\epsilon\psi'(x)\right\vert\leq\frac{\epsilon\tau}2.$$
From this we can deduce that 
$$\bigg\vert\frac{1}{\epsilon}\cdot\frac{1}{n}\sum_{i=1}^n\mathbb{E}_Q[\psi_i(x)c(x)]-\mathbb{E}_P[\psi(x)c(x)]\bigg\vert\leq\frac\tau2.$$ 



To summarize, the new learning algorithm $\mathcal{A}'$ that learns under distribution $Q$ will simulate algorithm $\mathcal{A}$ and whenever a query $\psi$ will be needed, it will take $O(1/\epsilon\tau)$  queries created by \cref{lem:bm2sq_rejection_sampling_sq} and return their average times $1/\epsilon$. Thus, $\mathcal{A}$ uses $O(m/\epsilon\tau)$ queries and its tolerance is $\epsilon\tau/2$.




\end{proof}

\begin{proof}[Proof of \cref{clm:proper_then_improper}]
Fix a class $\mathcal{C}$ and an improper learning algorithm $\mathcal{A}$ for $\mathcal{C}$. Denote the number of bits it uses by $b$, the number of samples by $m$, and the accuracy by $1-\epsilon$.
Define the algorithm $\mathcal{A}'$ as follows:
\begin{enumerate}
\item Run algorithm $\mathcal{A}$ that outputs hypothesis $h$ as its answer.
\item Go over all hypothesis in $\mathcal{C}$ and return one that agrees with $h$
on $1-2\epsilon$ of the examples by testing consistency on $O(\log\vert\mathcal{C}\vert/\epsilon^2)$ random examples. 
\end{enumerate}
Note that the second step does not use new samples and requires only
$\log|\mathcal{C}|+O(\log(\log|\mathcal{C}|/\epsilon))=O(\log (|\mathcal{C}|/\epsilon))$ additional bits. The algorithm $\mathcal{A}'$
functions correctly, because by the definition of the algorithm $\mathcal{A}$
there must be hypothesis in $\mathcal{C}$ that agrees on $(1-\epsilon)$ of the examples. By Hoeffding's inequality, the probability that there is a hypothesis that deviates by more than $\epsilon$ in approximating its loss is small and standard amplification techniques can be used to ensure that the probability error is at most $2/3$, while increasing the sample complexity by at most a constant multiplicative factor. 
The accuracy of $\mathcal{A}'$ is at least $1-3\epsilon$. 
\end{proof}

\begin{proof}[Proof of \cref{thm:bm2sq_learning_than_small_sq}]
Fix a class $\mathcal{C}$ and a distribution $Q$. Assume $\mathcal{C}$ is learnable under $Q$ with $m$ samples, $b$ bits, and accuracy $0.9$. Assume also that $SQ_Q(\mathcal{C})=d$. Thus, there are $d$ hypotheses $\mathcal{H}=\{h_1,\ldots,h_d\}$ such that $\vert\langle h_{i},h_{j}\rangle_{Q}\vert\leq1/d$.  Since  $\mathcal{H}\subseteq \mathcal{C}$ and by our assumption on the learnability of $\mathcal{C}$, we get that $\mathcal{H}$ is learnable under $Q$ with $m$ samples, $b$ bits, and accuracy $0.9$.
From \cref{clm:proper_then_improper}, we get that $\mathcal{H}$ is \emph{properly} learnable under $Q$ with $O(m)$ samples, $b+O(\log |\mathcal{H}|)$ bits, and accuracy $0.7$. 

We can deduce that there is a learning algorithm for $\mathcal{H}$ that returns the exact hypothesis, as the hypotheses in $\mathcal{H}$ are far apart from each other. Specifically, we know that between any two hypotheses $i\neq j$ there is at least $\frac12-\frac1{2d}$ disagreement. If  $\frac12-\frac1{2d} > 0.3$, then learning exactly is equivalent to properly learning up to accuracy $0.7$. The equation $\frac12-\frac1{2d} > 0.3$ is equivalent to $d = \Omega(1)$.  

Since the hypotheses in $\mathcal{H}$ are far apart from each other, the number of bits $\mathcal{A}$ uses is lower bounded by $b\geq\log\vert\mathcal{H}\vert$, as the hypothesis in $\mathcal{H}$ returned by the algorithm must be computed from its internal state. Thus the memory requirement of the proper learning algorithm is $O(b)$ bits.

Now we can apply \cref{clm:lower_bound_for_sq}, as for large enough constant $M$, for $m\geq M$, the probability to 
succeed, $2/3$, is $\Omega(1/m)$. 
We get that 
$m=d^{\Omega(1)}$  or $b=\Omega(\log^2 d)$. 
Equivalently, $d=m^{O(1)}$ or $d=2^{O(\sqrt{b})}$.
In other words, $SQ_{Q}(\mathcal{C})\leq \max(m^{O(1)}, 2^{O(\sqrt{b})})$. 
\end{proof}

\begin{lem}\label{lem:bm2sq_rejection_sampling_sq}
For any $\gamma\in[-1,1]$ and $\tau\in(0,1]$, there are $n=O(1/\tau)$ numbers $y_1,\ldots,y_n\in\{-1,1\}$ such that $|\frac1n\sum_i y_i-\gamma|\leq\tau$. 
\end{lem}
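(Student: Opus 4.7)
The plan is to observe that averages of $\pm 1$ values form an equally spaced grid inside $[-1,1]$, and then choose $n$ so that the grid is fine enough to approximate $\gamma$ to within $\tau$.

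First, I would parametrize the possible averages. If $k$ of the $y_i$ equal $+1$ and $n-k$ equal $-1$, then
\[
\frac{1}{n}\sum_{i=1}^n y_i \;=\; \frac{k - (n-k)}{n} \;=\; \frac{2k}{n} - 1.
\]
As $k$ ranges over $\{0,1,\ldots,n\}$, this expression takes the $n+1$ values $-1, -1+\tfrac{2}{n}, -1+\tfrac{4}{n}, \ldots, 1$, which are equally spaced in $[-1,1]$ with consecutive gap $2/n$.

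Next, I would pick $n = \lceil 1/\tau \rceil$, so that $1/n \le \tau$. Since $\gamma \in [-1,1]$ lies inside the convex hull of the grid points $\{-1+2k/n\}_{k=0}^n$, there exists some $k^\star \in \{0,1,\ldots,n\}$ such that $\bigl|\tfrac{2k^\star}{n} - 1 - \gamma\bigr| \le \tfrac{1}{n} \le \tau$. Choosing $y_1,\ldots,y_{k^\star} = +1$ and $y_{k^\star+1},\ldots,y_n = -1$ then gives the desired approximation, with $n = \lceil 1/\tau \rceil = O(1/\tau)$.

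There is essentially no obstacle here: the statement reduces to the elementary fact that a real number in $[-1,1]$ can be approximated to within half the grid spacing by a point on a uniform grid, and choosing $n$ proportional to $1/\tau$ makes that half-spacing at most $\tau$.
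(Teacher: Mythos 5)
Your proof is correct and follows essentially the same route as the paper's: both identify the achievable averages as a uniform grid of spacing $2/n$ in $[-1,1]$, choose $n$ on the order of $1/\tau$ so the nearest grid point is within $\tau$ of $\gamma$, and realize that grid point by an appropriate split of $+1$'s and $-1$'s. No issues.
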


\begin{proof}
Take $n$ such that $1/n < \tau$. Let $k \in \{0,1,\ldots,n\}$ be such that $(n-2k)/n$ is $1/n$ close to $\gamma$. Take $y_1=\ldots=y_k=-1$ and $y_{k+1}=\ldots=y_n=1$.
\end{proof}

\end{document}